\setlist{noitemsep, topsep=0pt, leftmargin=*}
\newtheorem{theorem}{Theorem}
\title{Linear Connectivity Reveals Generalization Strategies}
\author{%
	{Jeevesh Juneja$^1$, \ Rachit Bansal$^1$, \ Kyunghyun Cho$^2$, \ João Sedoc$^2$, \ Naomi Saphra$^2$}\\
	$^1$Delhi Technological University, $^2$New York University\\
	\texttt{jeeveshjuneja@gmail.com, nsaphra@nyu.edu}
}
\begin{document}

\maketitle
\begin{abstract}
    In the mode connectivity literature, it is widely accepted that there are common circumstances in which two neural networks, trained similarly on the same data, will maintain loss when interpolated in the weight space. In particular, transfer learning is presumed to ensure the necessary conditions for linear mode connectivity across training runs.
    In contrast to existing results from image classification, we find that among text classifiers (trained on MNLI, QQP, and CoLA), some pairs of finetuned models have large barriers of increasing loss on the linear paths between them. On each task, we find distinct clusters of models which are linearly connected on the test loss surface, but are disconnected from models outside the cluster---models that occupy separate basins on the surface. By measuring performance on specially-crafted diagnostic datasets, we find that these clusters correspond to different generalization strategies. For example, on MNLI, one cluster behaves like a bag of words model under domain shift, while another cluster uses syntactic heuristics. Our work demonstrates how the geometry of the loss surface can guide models towards different heuristic functions in standard finetuning settings.

\end{abstract}

\section{Introduction}
\label{sec:intro}

Recent work on the geometry of loss landscapes has repeatedly demonstrated a tendency for fully trained models to fall into a single linearly-connected basin of the loss surface across different training runs \citep{entezari_role_2021,frankle_linear_2020,neyshabur_what_2020}. This observation has been presented as a fundamental inductive bias of SGD \citep{ainsworth_git_2022}, and linear mode connectivity (LMC) has been linked to in-domain generalization behavior \citep{frankle_linear_2020,neyshabur_what_2020}. However, these results have relied exclusively on a single task: image classification. In fact, methods relying on assumptions of LMC can fail when applied outside of image classification tasks \citep{wortsman_model_2022}, but other settings such as NLP nonetheless remain neglected in the mode connectivity literature. 

In this work, we study LMC in several text classification tasks, repeatedly finding counterexamples where multiple basins are accessible during training by varying data order and classifier head initialization. Furthermore, we link a model's basin membership to a real consequence: behavior under distribution shift.

In NLP, generalization behavior is often described by precise rules and heuristics, as when a language model observes a plural subject noun and thus prefers the following verb to be pluralized (\textit{the dogs play} rather than \textit{plays}). We can measure a model's adherence to a particular rule through the use of diagnostic or challenge sets. Previous studies of model behavior on out-of-distribution (OOD) linguistic structures show that identically trained finetuned models can exhibit variation in their generalization to diagnostic sets~\citep{McCoy2020BERTsOA,zhou_curse_2020}. For example, many models perform well on in-domain (ID) data, but diagnostic sets reveal that some of them deploy generalization strategies that fail to incorporate position information~\citep{mccoy_right_2019} or are otherwise brittle. 

These different generalization behaviors have never been linked to the geometry of the loss surface. In order to explore how barriers in the loss surface expose a model's generalization strategy, we will consider a variety of text classification tasks. We focus on Natural Language Inference~\citep[NLI; ][]{N18-1101,Consortium96usingthe}, as well as paraphrase and grammatical acceptability tasks. Using standard finetuning methods, we find that in all three tasks, models that perform similarly on the same diagnostic sets are linearly connected without barriers on the ID loss surface, but they tend to be disconnected from models with different generalization behavior.

Our code and models are public.\footnote{Code: \url{https://github.com/aNOnWhyMooS/connectivity}; Models: \url{https://huggingface.co/connectivity}} Our main contributions are:

\begin{itemize}

    \item In contrast with existing work in computer vision~\citep{neyshabur_what_2020}, we find that transfer learning can lead to different basins over different finetuning runs (Section~\ref{sec:interpolation}). We develop a metric for model similarity based on LMC, the \textbf{convexity gap} (Section~\ref{sec:bh}), and an accompanying method for clustering models into basins (Section~\ref{sec:clustering}).

    \item We align the basins to specific generalization behaviors (Section~\ref{sec:bh}). In NLI (Section~\ref{sec:nli_overview}), they correspond to a preference for either syntactic or lexical overlap heuristics. On a paraphrase task (Section~\ref{sec:paraphrase_overview}), they split on behavior under word order permutation. On a linguistic acceptability task, they reveal the ability to classify unseen linguistic phenomena (Appendix~\ref{sec:cola}).

    \item We find that basins trap a portion of finetuning runs, which become increasingly disconnected from the other models as they train (Section~\ref{sec:dynamics}). Connections between models in the early stages of training may thus predict final heuristics.
\end{itemize}

\section{Identifying generalization strategies}

Finetuning on standard GLUE~\citep{Wang2018GLUEAM} datasets often leads to models that perform similarly on in-domain (ID) test sets~\citep{sellam_multiberts_2021}. In this paper, to evaluate the functional differences between these models, we will measure generalization to OOD domains. We therefore study the variation of performance on existing diagnostic datasets. We call models with poor performance on the diagnostic set \textbf{heuristic models} while those with high performance are \textbf{generalizing models}. We study three tasks with diagnostic sets: NLI, paraphrase, and grammaticality (the latter in Appendix~\ref{sec:cola}).

All models are initialized from $\texttt{bert-base-uncased}$
\footnote{
    \url{https://storage.googleapis.com/bert_models/2018_10_18/uncased_L-12_H-768_A-12.zip}
}
with a linear classification head and trained with Google's original trainer.\footnote{
    \url{https://github.com/google-research/bert} QQP models are trained with Google's recommended default hyperparameters (details in Appendix~\ref{sec:exp_app}). MNLI models are those provided by \citet{McCoy2020BERTsOA}.
} \textit{The only difference between models trained on a particular dataset is the random seed which determines both the initialization of the linear classification head and the data order}; we do not deliberately introduce preferences for different generalization strategies.

\subsection{Natural Language Inference}
\label{sec:nli_overview}

NLI is a common testbed for NLP models. This binary classification task poses a challenge in modeling both syntax and semantics.
The input to an NLI model is a pair of sentences such as: \{\textit{Premise:} The dog scared the cat. \textit{Hypothesis:} The cat was scared by the dog.\} Here, the label is positive or \textbf{entailment}, because the hypothesis can be inferred from the premise. If the hypothesis were, ``The dog was scared by the cat'', the label would be negative or \textbf{non-entailment}. We use the MNLI~\citep{N18-1101} corpus, and inspect losses on the ID ``matched'' validation set.

NLI models often ``cheat'' by relying on heuristics, such as overlap between individual lexical items or between syntactic constituents shared by the premise and hypothesis. If a model relies on lexical overlap, both the entailed and non-entailed examples above might be given positive labels, because all three sentences contain ``scared'', ``dog'', and ``cat''. \citet{mccoy_right_2019} responded to these shortcuts by creating HANS, a diagnostic set of sentence pairs that violate such heuristics:

\begin{itemize}
    \item \textbf{Lexical overlap (HANS-LO):} Entails any hypothesis containing the same words as the premise.
    \item \textbf{Subsequence: } Entails any hypothesis containing contiguous sequences of words from the premise.
    \item \textbf{Constituent: } Entails any hypothesis containing syntactic subtrees from the premise.
\end{itemize}
Unless otherwise specified, we use the non-entailing HANS subsets for measuring reliance on heuristics, so higher accuracy on HANS-LO indicates less reliance on lexical overlap heuristics.

\subsection{Paraphrase}
\label{sec:paraphrase_overview}

Quora Question Pairs~\citep[QQP; ][]{Wang2017BilateralMM} is a common paraphrase corpus containing 400k pairs of questions, annotated with a binary value to indicate whether they are duplicates. We use the PAWS-QQP~\citep{paws2019naacl} diagnostic set to identify different generalization behaviors. PAWS-QQP contains QQP sentences that have been permuted to potentially create different meanings. In other words, they are pairs that may violate a lexical overlap heuristic.

\section{Linear Mode Connectivity} 
\label{sec:interpolation}

\begin{figure*}
    \centering
    \subfigure[MNLI (top) and HANS-LO (bottom) loss variation during linear interpolation.]{
        \includegraphics[width=0.8\textwidth]{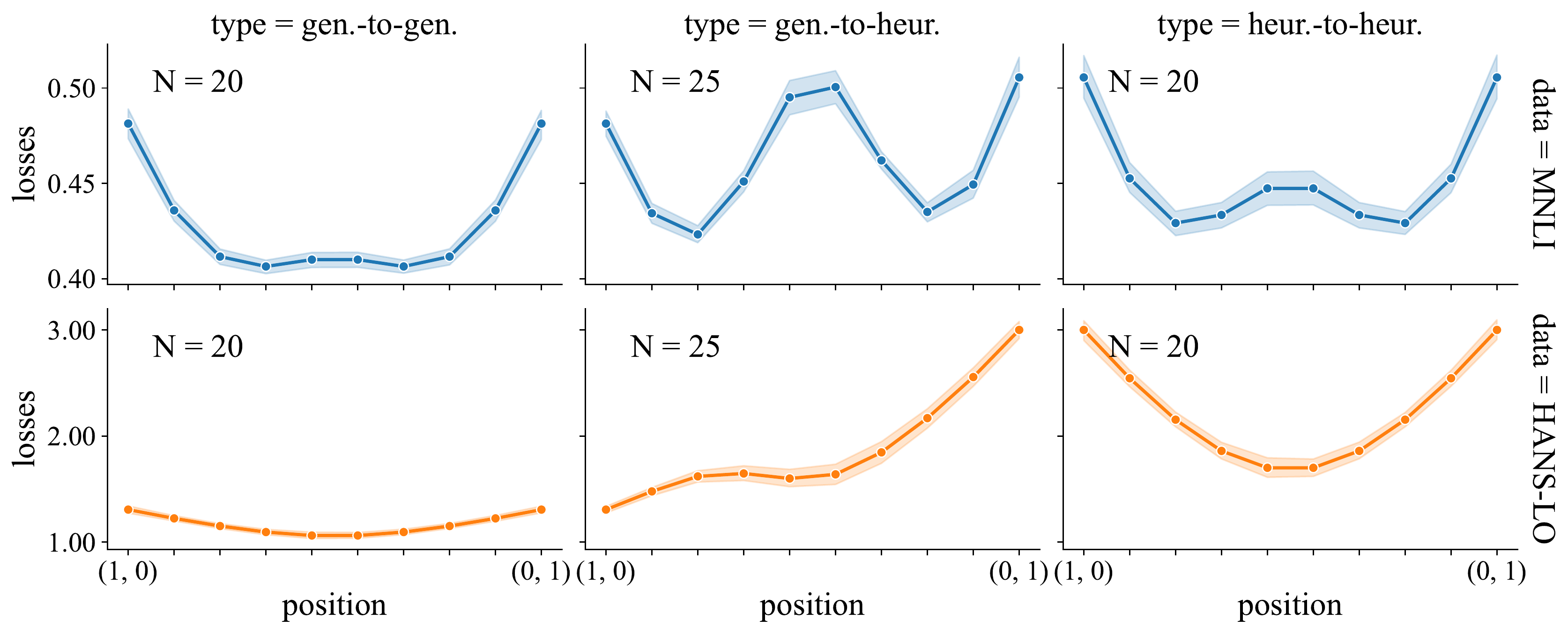}
        \label{fig:linear_hans}}
    \subfigure[MNLI (top) and HANS-LO (bottom) accuracy variation during linear interpolation.]{
        \includegraphics[width=0.8\textwidth]{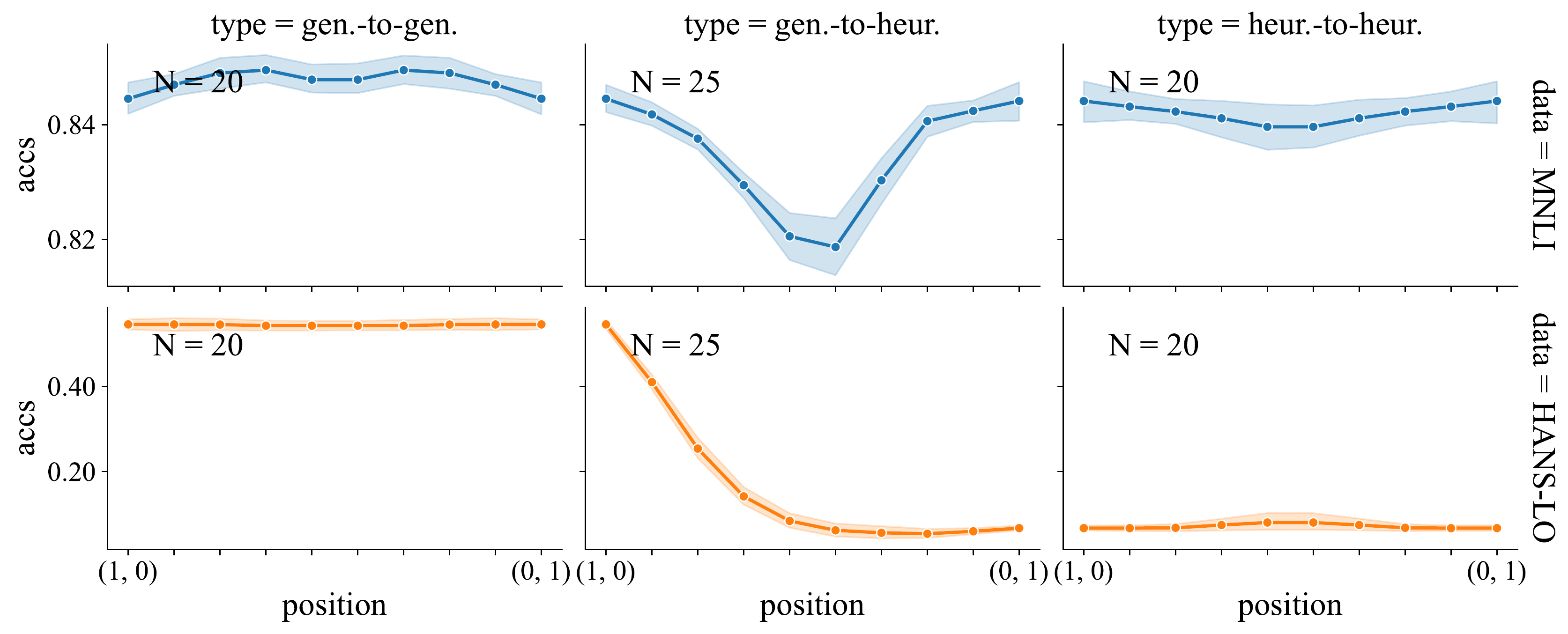}
        \label{fig:linear_hans_acc}}
    \caption{\textbf{[Loss surface on the line joining pairs of models trained on MNLI.]} Performance during linear interpolation between pairs of models taken from the 5 best (gen.) and 5 worst (heur.) in HANS-LO accuracy. Heuristic models tend to be poorly connected to generalizing models, although they are well connected to each other. $N$ indicates number of model pairs. Position on the x-axis indicates the value of $\alpha$ during interpolation.}
\end{figure*}

Models discovered by SGD are generally connected by paths over which the loss is maintained~\citep{draxler_essentially_2019,garipov_loss_2018}, but if we limit such paths to be linear, connectivity is no longer guaranteed. We may still find, however, that two parameter settings $\theta_A$ and $\theta_B$, which achieve equal loss, can be connected by linear interpolation~\citep{nagarajan_uniform_2019,goodfellow_qualitatively_2015} without any increase in loss. In other words, loss $\mathcal{L}(\theta_{\alpha};X_{\textrm{train}},Y_{\textrm{train}}) \leq \mathcal{L}(\theta_A;X_{\textrm{train}},Y_{\textrm{train}})$ and $\mathcal{L}(\theta_{\alpha};X_{\textrm{train}},Y_{\textrm{train}}) \leq \mathcal{L}(\theta_B;X_{\textrm{train}},Y_{\textrm{train}})$ in each parameter setting $\theta_{\alpha}$ defined by any scalar $0 \leq \alpha \leq 1$:

\begin{equation}
    \label{eqn:lin_interpol}
    \theta_{\alpha} = \alpha \theta_A + (1-\alpha) \theta_B
\end{equation}

\citet{frankle_linear_2020} characterized LMC between models initialized from a single pruned network, finding that the trained models achieved the same performance as the entire un-pruned network only when they were linearly connected across different SGD seeds. Their results suggest that performance is closely tied to the LMC of the models in question. This implied result is further supported by \citet{entezari_role_2021}, which found a larger barrier between models when they exhibited higher test error. \citet{neyshabur_what_2020} even suggested that LMC is a crucial component of transfer learning, finding that models initialized from the same pretrained model are linearly connected, whereas models trained from scratch exhibit barriers even when initialized from the same random weights.

Our results complicate the narrative around LMC (Fig.~\ref{fig:linear_hans}). While we find that models with high performance on OOD data were indeed linearly connected to each other, models with low performance were also linearly connected to each other. It seems that the heuristic and generalizing models occupy two different linear basins, with barriers in the ID loss surface between models in each of these two basins.

\paragraph{HANS performance during interpolation:}

It is clear from Fig.~\ref{fig:linear_hans} that, when interpolating between heuristic models, OOD HANS-LO loss significantly improves further from the end points. This finding implies that the heuristic basin does contain models that have better generalization than those found by SGD. In contrast, the generalizing basin shows only a slight improvement in OOD loss during interpolation, even though the improvement in ID test loss is more substantial than in the heuristic basin. However, we can see from Fig.~\ref{fig:linear_hans_acc} that low losses on interpolated models do not always translate to consistently higher accuracy, although interpolated models that fall on barriers of elevated test loss do lead to lower ID accuracy.
These results suggest a substantial divergence in model calibration between interpolations of different heuristics.

\begin{figure}
    \centering
    \makebox[0pt]{
        \includegraphics[width=1.15\textwidth]{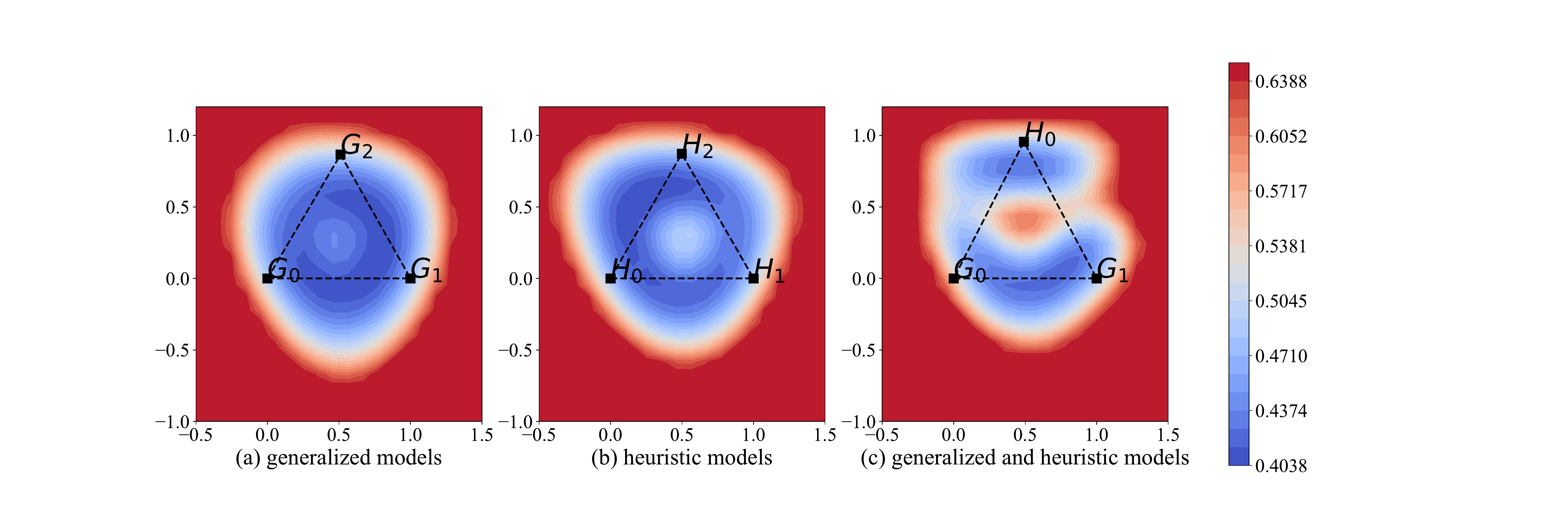}}
    \caption{Loss surfaces on the unique plane through 3 models. Colorscale indicates the MNLI matched validation set loss surface, on planes passing through various kinds of finetuned MNLI models. The points $G_{0\ldots 2}$ and $H_{0\ldots 2}$ denote the generalizing and heuristic models respectively. The planes are plotted with a scale where 1 unit is the same as the length of the bottom edge of the triangle formed by the three points defining the plane. For example, the plane containing points $G_0, G_1, G_2$ is plotted with the scale $1$ unit = len($G_0G_1$), in both $X$ and $Y$ directions.}
    \label{fig:mnli_complex}
\end{figure}

\paragraph{Connections over 2 dimensions:}

To understand the loss topography better, we present planar views of three models using as in \citet{benton_loss_2021}.
In the plane covering the heuristic and generalizing models, a large central barrier intrudes on the connecting edge between heuristic and generalizing models (Fig.~\ref{fig:mnli_complex}). On the other hand, the planes passing through only heuristic or only generalizing models each occupy planes that exhibit a smaller central barrier. Visibly, this barrier is smallest for the perimeter composed of generalizing models and largest for the mixed perimeter. These topographies motivate the following notion of an $\epsilon$-convex basin.

\subsection{Convex basins}
\label{sec:basins}

Inspired by \citet{entezari_role_2021}, we define our notion of a basin in order to formalize types of behavior during linear interpolation. In contrast to their work, however, our goal is not to identify a largely stable region (the bottom of a basin). Instead, we are interested in whether a set of models are connected to the same low-loss points by linear paths of non-increasing loss, thus sharing optima that may be accessible by SGD. This motivation yields the following definition.

For resolution $\epsilon \geq 0$, we define an \textbf{$\bm{\epsilon}$-convex basin} as a convex set $S$, such that, for any set of points $w_1, w_2, ..., w_k \in S$ and any set of coefficients $\alpha_1 \ldots \alpha_n \geq 0$ where $\sum_k \alpha_k = 1$, a relaxed form of Jensen's inequality holds:

\begin{equation}
    \label{eqn:epsilon_convex_def}
    \mathcal{L}(\sum_{k=1}^n \alpha_k w_k) \leq \epsilon + \sum_{k=1}^n \alpha_k \mathcal{L}(w_k)
\end{equation}

In particular, we say that a set of trained models $\theta_1,..,\theta_n$ form an $\epsilon$-convex basin if the region inside their convex hull is an $\epsilon$-convex basin. In the case where $\epsilon=0$, we can equivalently claim that the behavior of $\mathcal{L}$ in the region is convex.

This definition satisfies our stated motivation because any two points within an $\epsilon$-convex basin are connected linearly with monotonically (within $\epsilon$) decaying loss to the same minima within that basin. While such a linear path does not strictly describe a typical SGD optimization trajectory, such linear interpolations are frequently used to analyze optimization behaviors like module criticality~\citep{neyshabur_what_2020,chatterji_intriguing_2020}. One justification of this practice is that much of the oscillation and complexity of the training trajectory is constrained to short directions of the solution manifold~\citep{jastrzebski_break-even_2020,xing_walk_2018,ma_multiscale_2022}, while large scale views of the landscape describe smoother and more linear trajectories~\citep{goodfellow_qualitatively_2015,fort_large_2019}. We use this definition of $\epsilon$-convexity to describe linearly connected points as existing in the same basin. However, pairwise non-increasing linear connections are a necessary but not sufficient condition for all convex combinations to be non-increasing, as seen in Fig.~\ref{fig:mnli_complex}.

\begin{figure}
    \centering
    \subfigure{
        \includegraphics[height=0.35\textwidth]{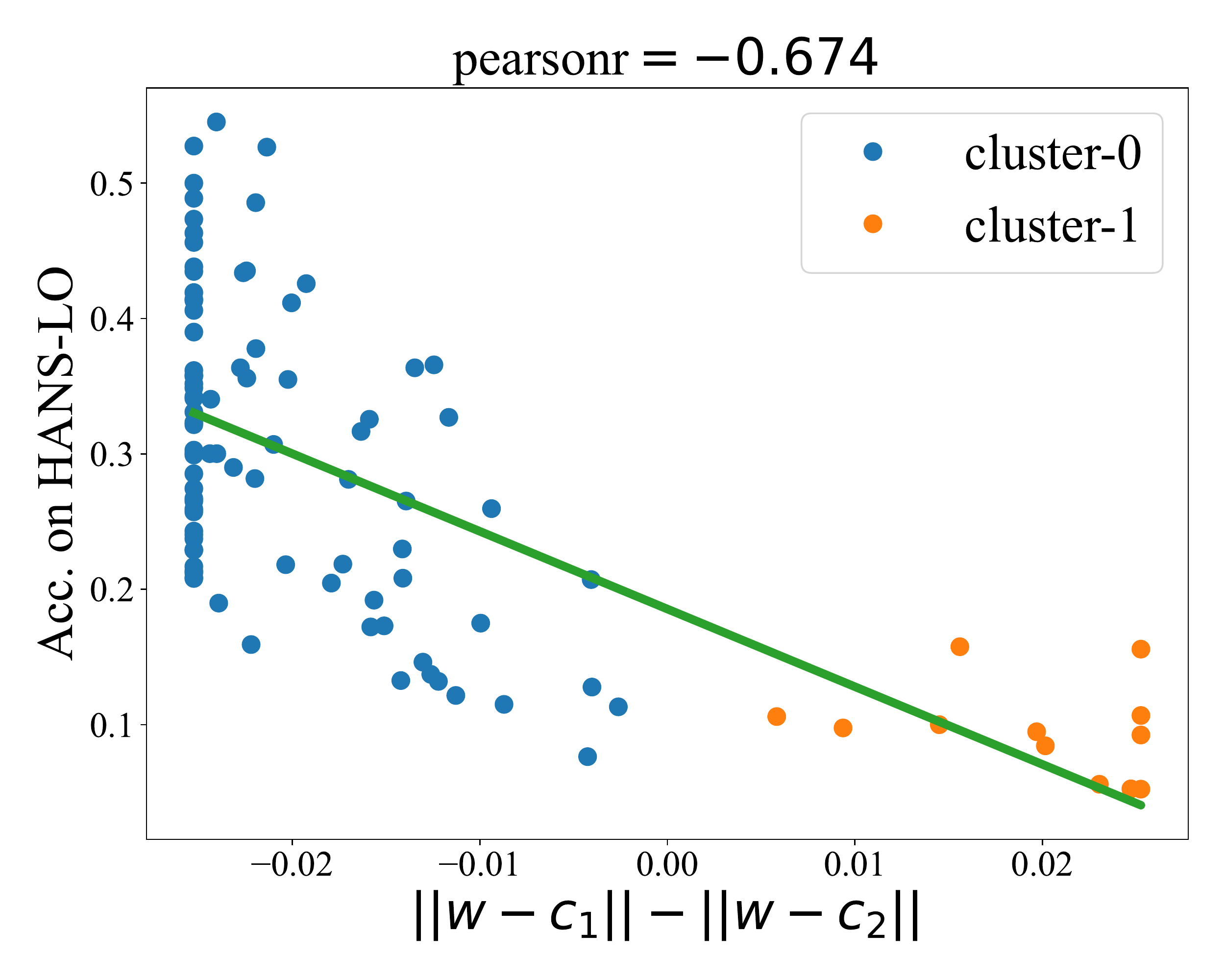}
        \label{fig:hans_cluster_corr_train}
    }
    \hfill
    \subfigure{
        \includegraphics[height=0.35\textwidth]{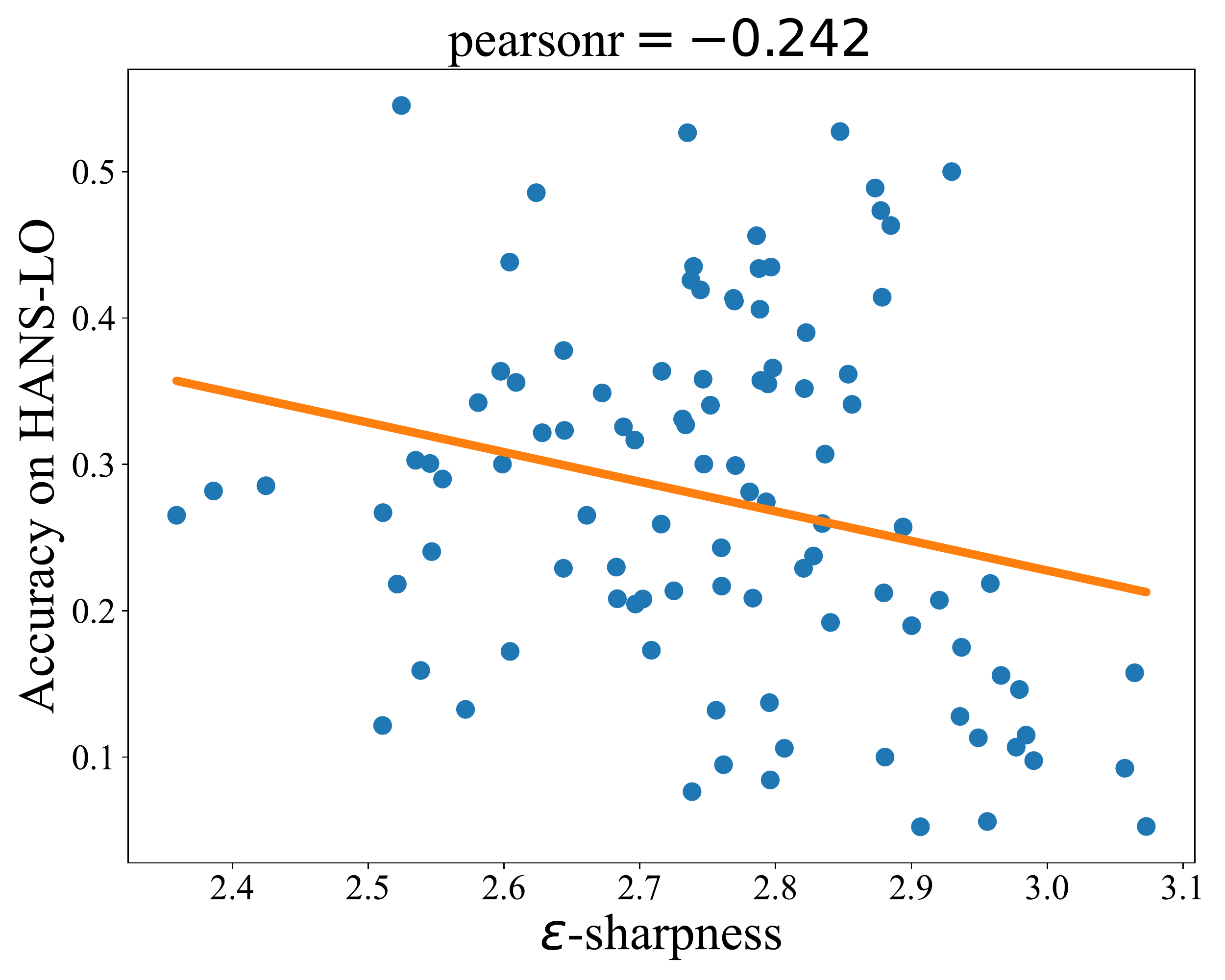}
        \label{fig:hans_sharp_corr}
    }

    \caption{Using features of the ID loss landscape to predict reliance of MNLI models on lexical overlap heuristics. Least squares fit shown. (a) HANS-LO accuracy vs difference between distances from the centroids $c_1,c_2$ of the $\mathbf{CG}$-based clusters. (b) HANS-LO accuracy vs $\epsilon$-sharpness.}
    \label{fig:subfigures}
\end{figure}

\begin{figure}
    \centering
    \subfigure{
        \includegraphics[height=0.35\textwidth]{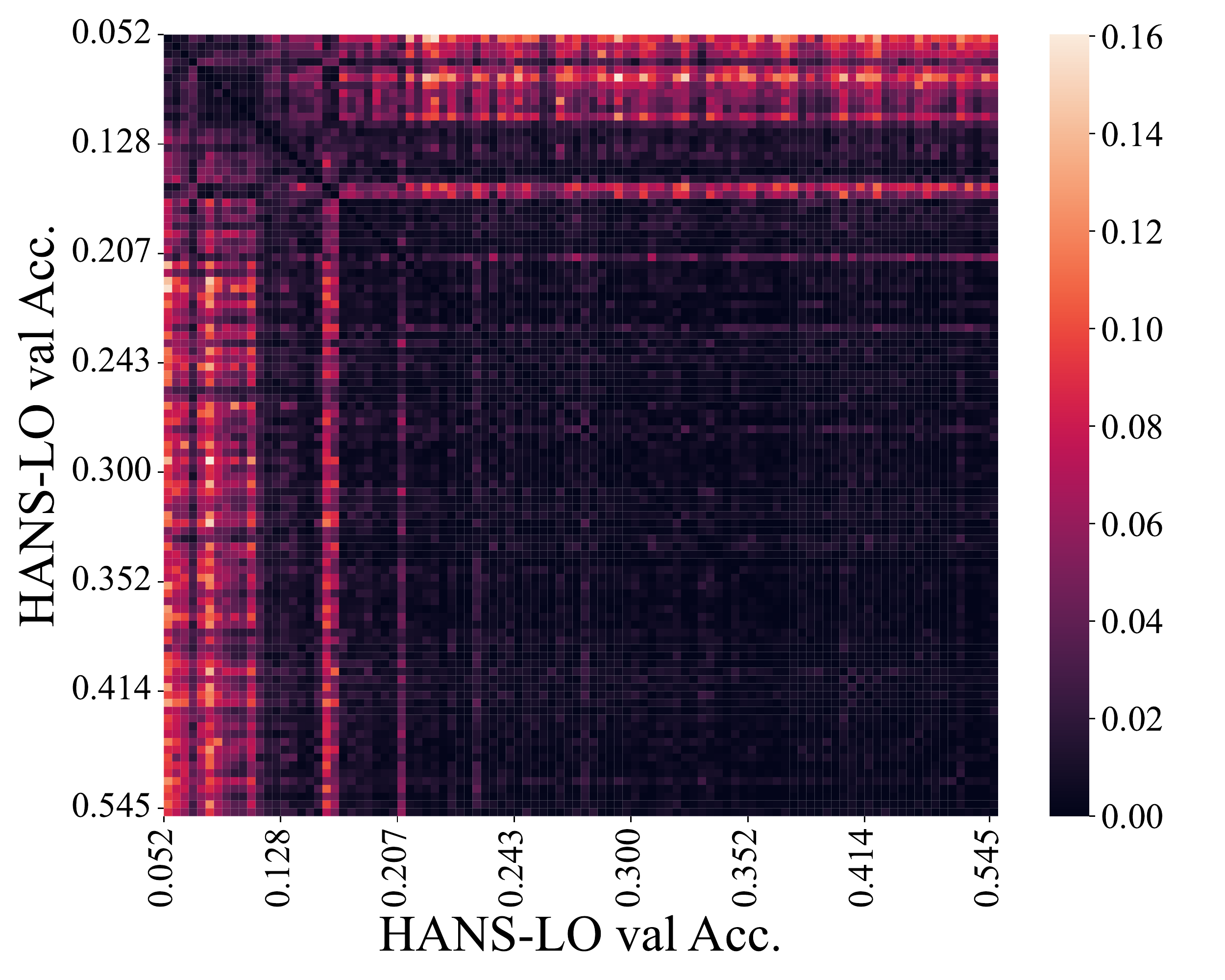}
        \label{fig:hans_bh_hm}
    }
    \hfill
    \subfigure{
        \includegraphics[height=0.35\textwidth]{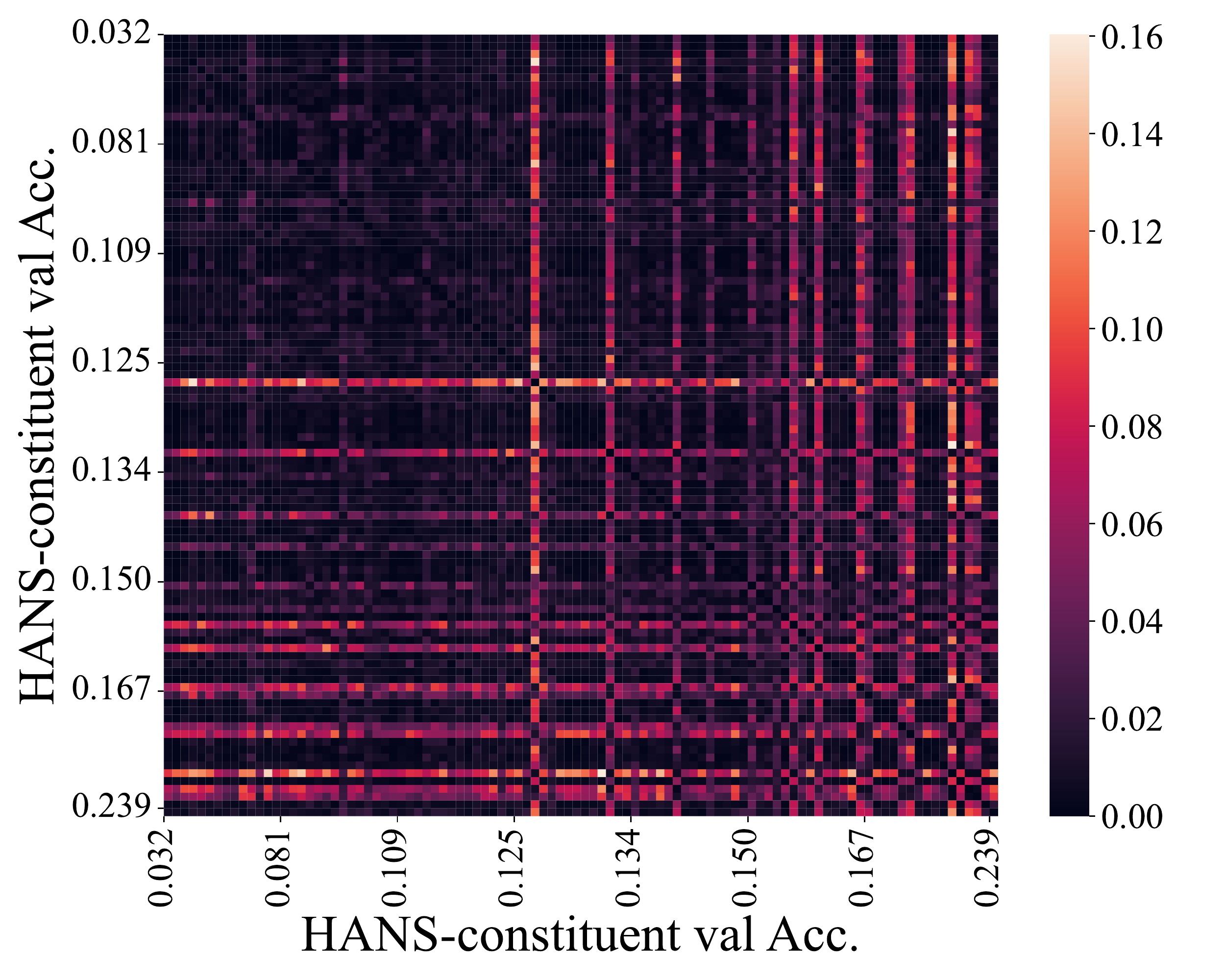}
        \label{fig:hans_bh_hm_const}
    }

    \caption{Color indicates $\mathbf{CG}$ distance (on ID MNLI validation loss) between the x- and y-axis NLI models. Models are sorted by increasing performance on (a) HANS-LO. (b) HANS-constituent.}
    \label{fig:clusters}
\end{figure}

\section{The Convexity Gap}
\label{sec:bh}

One possibility we argue against is that the increasing loss between models with different heuristics is actually an effect of sharper minima in the heuristic case. There is a significant body of work on the controversial~\citep{dinh_sharp_2017,kaur_maximum_2022} association between wider minima and generalization in models~\citep{li_visualizing_2018,Keskar2017OnLT,Hochreiter1997FlatM}. Prior work shows that minima forced to memorize a training set without generalizing on a test set exist in very sharp basins~\citep{huang_understanding_2020}, but these basins are discovered only by directly pessimizing on the test set while optimizing on the train set. It is less clear whether width is predictive of generalization across models trained conventionally.

We do not find that sharpness is a good predictor of the quality of a model's generalization strategy (Fig.~\ref{fig:hans_sharp_corr}). On the contrary, we find that the basin in which a model sits is far more predictive of generalization behavior than optimum width is. To identify these discrete basins, we define a metric based on linear mode connectivity, the \textbf{convexity gap} ($\mathbf{CG}$), and use it to perform spectral clustering.

\citet{entezari_role_2021} define a barrier's height $\mathbf{BH}$ on a linear path from $\theta_1$ to $\theta_2$ as:
\begin{equation}
    \label{eqn:bh_def}
    \mathbf{BH}(\theta_1, \theta_2) = \sup_{\alpha}[\mathcal{L}(\alpha\theta_1+(1-\alpha)\theta_2) - (\alpha\mathcal{L}(\theta_1)+(1-\alpha)\mathcal{L}(\theta_2))] \qquad \alpha \in [0,1]
\end{equation}
We define the convexity gap on a linear path from $\theta_1$ to $\theta_2$ as the maximum possible barrier height on any sub-segment of the linear path joining $\theta_1$ and $\theta_2$. Mathematically,
\begin{equation}
    \label{eqn:cg_def}
    \begin{split}
        \mathbf{CG}(\theta_1, \theta_2) = \sup_{\gamma, \beta} \mathbf{BH}(\gamma\theta_1+(1-\gamma)\theta_2, \beta\theta_1+(1-\beta)\theta_2)\qquad \gamma, \beta \in [0,1]
    \end{split}
\end{equation}
Under this definition, an $\epsilon$-convex basin has a convexity gap of at most $\epsilon$ within the basin (proof in Appendix~\ref{sec:proof}).
In Appendix~\ref{sec:alternative_barrier}, we compare our metric to area under the curve (AUC) and $\mathbf{BH}$. These alternatives exhibit weaker clusters, though Euclidean distance has similar behavior to $\mathbf{CG}$.

\subsection{Clustering}
\label{sec:clustering}

The basins that form from this distance metric are visible based on connected sets of models in the distance heatmap (Fig.~\ref{fig:hans_bh_hm}). To quantify basin membership into a prediction of HANS performance, we perform spectral clustering, with the distances between points defined as $\mathbf{CG}$-distance on the ID loss surface. Using the difference between distances from each cluster centroid $\|w - c_1\|$ and $\|w - c_2\|$\footnote{Here, $w, c_1, c_2$ are in spectral embedding space.}, we see a significantly larger correlation with HANS performance  (Fig.~\ref{fig:hans_cluster_corr_train}), compared to a baseline of the model's $\epsilon$-sharpness (Fig.~\ref{fig:hans_sharp_corr}). Furthermore, a linear regression model offers significantly better performance based on spectral clustering membership than sharpness.\footnote{
    The connection between cluster membership and generalization performance is continuous. That is, those members of the generalizing cluster which are closer to the cluster boundary behave a little more like heuristic models than the other models in that cluster do. 
}

In Fig.~\ref{fig:hans_bh_hm_const}, we see the heuristic that defines the larger cluster: constituent overlap. Models that perform well on constituent overlap diagnostic sets tend to fall in the basin containing models biased towards lexical overlap. This behavior characterizes the two basins: the larger basin is syntax-aware (tending to acquire heuristics that require awareness of constituent structure), while the smaller basin is syntax-unaware (acquiring heuristics that rely only on unordered sets of words).\footnote{
    We reject an alternative hypothesis, that the relevant difference is an awareness of word position rather than syntax, because reliance on subsequence overlap is poorly predicted by basin membership (Appendix~\ref{sec:subsequence}).
}

\paragraph{Distributions of the clusters:}

We find (Fig.~\ref{fig:distributions}) that $\mathbf{CG}$-based cluster membership accounts for some of the heavy tail of performance on HANS-LO, supporting the claim that the convex basins on the loss surface differentiate generalization strategies. However, the distribution of ID performance also differs significantly between clusters, so we may choose to relate basin membership to ID behavior instead of OOD. We discuss this alternative framing further in Appendix~\ref{sec:indomain}.

\begin{figure}
    \centering
    \subfigure{
        \includegraphics[height=0.18\textwidth]{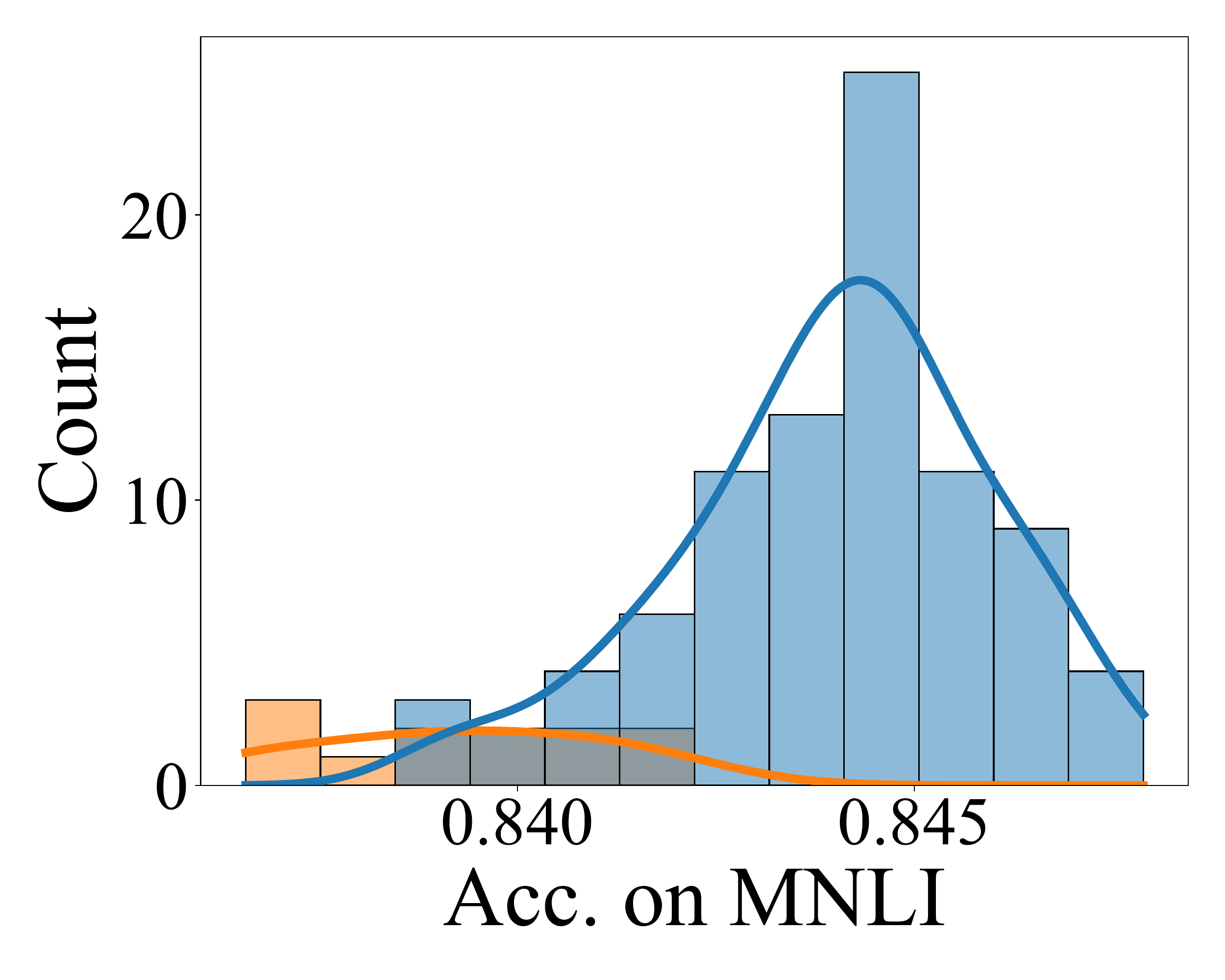}
        \label{fig:distribution_mnli}
    }
    \subfigure{
        \includegraphics[height=0.18\textwidth]{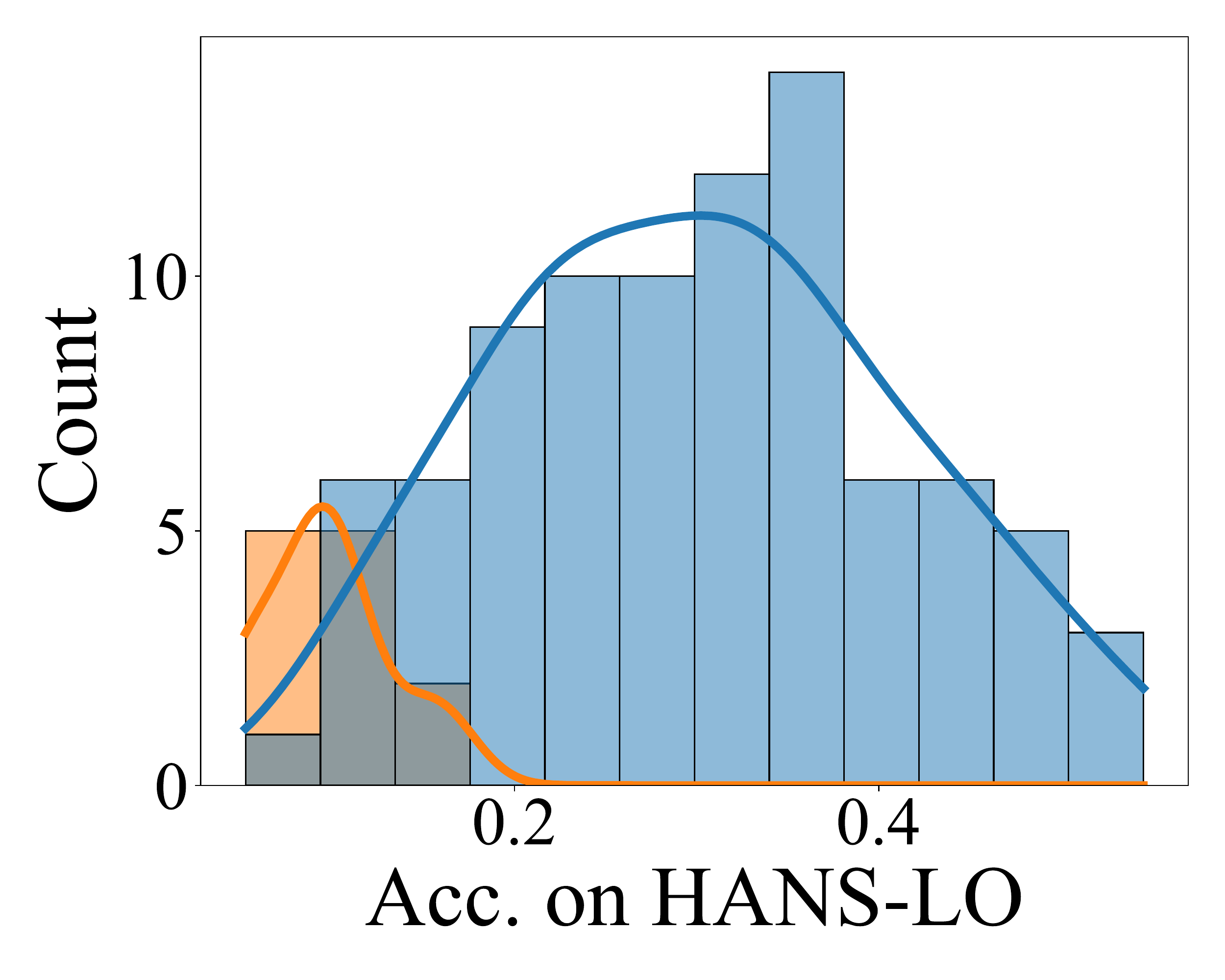}
        \label{fig:distribution_lo}
    }
    \subfigure{
        \includegraphics[height=0.18\textwidth]{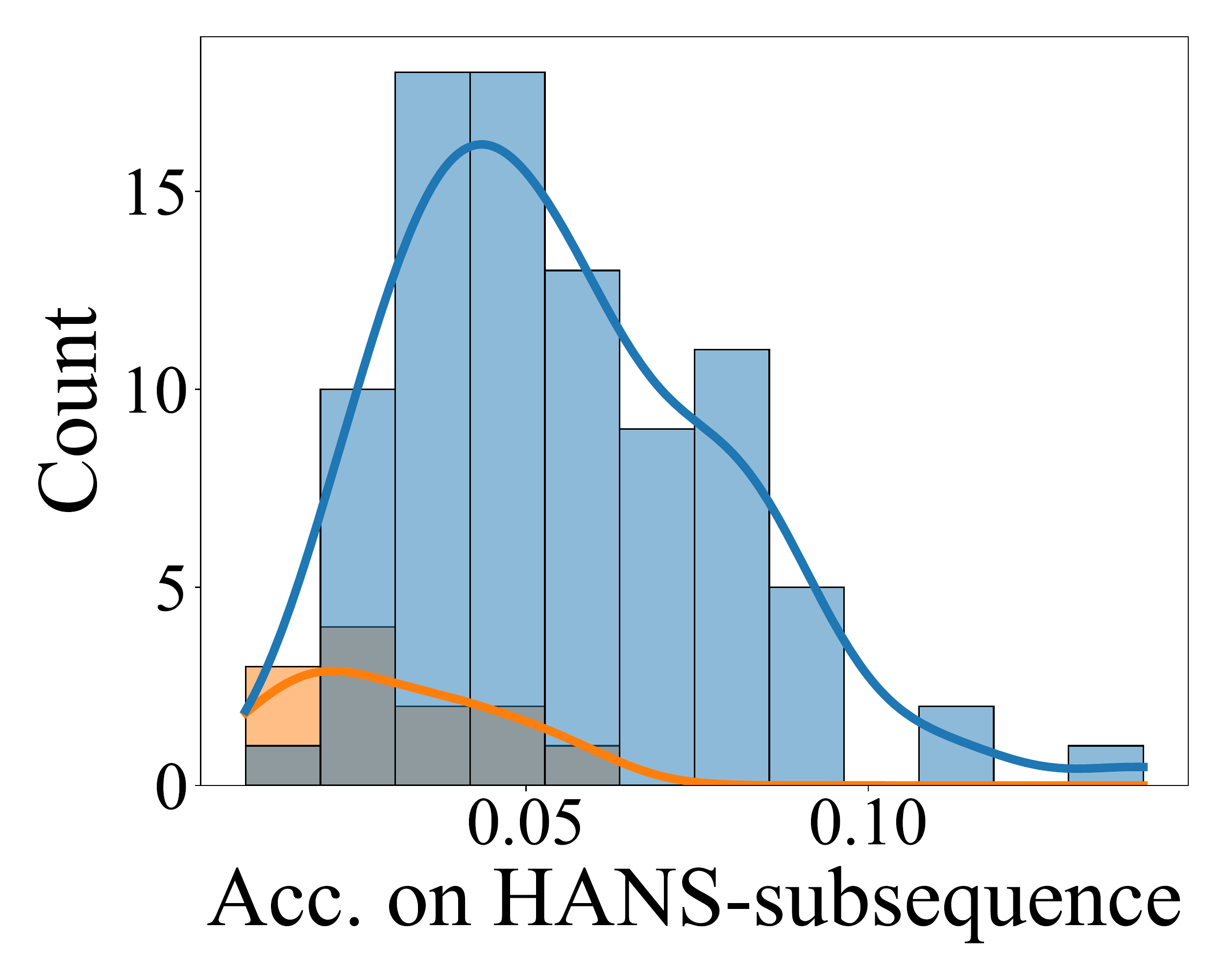}
    }
    \subfigure{
        \includegraphics[height=0.18\textwidth]{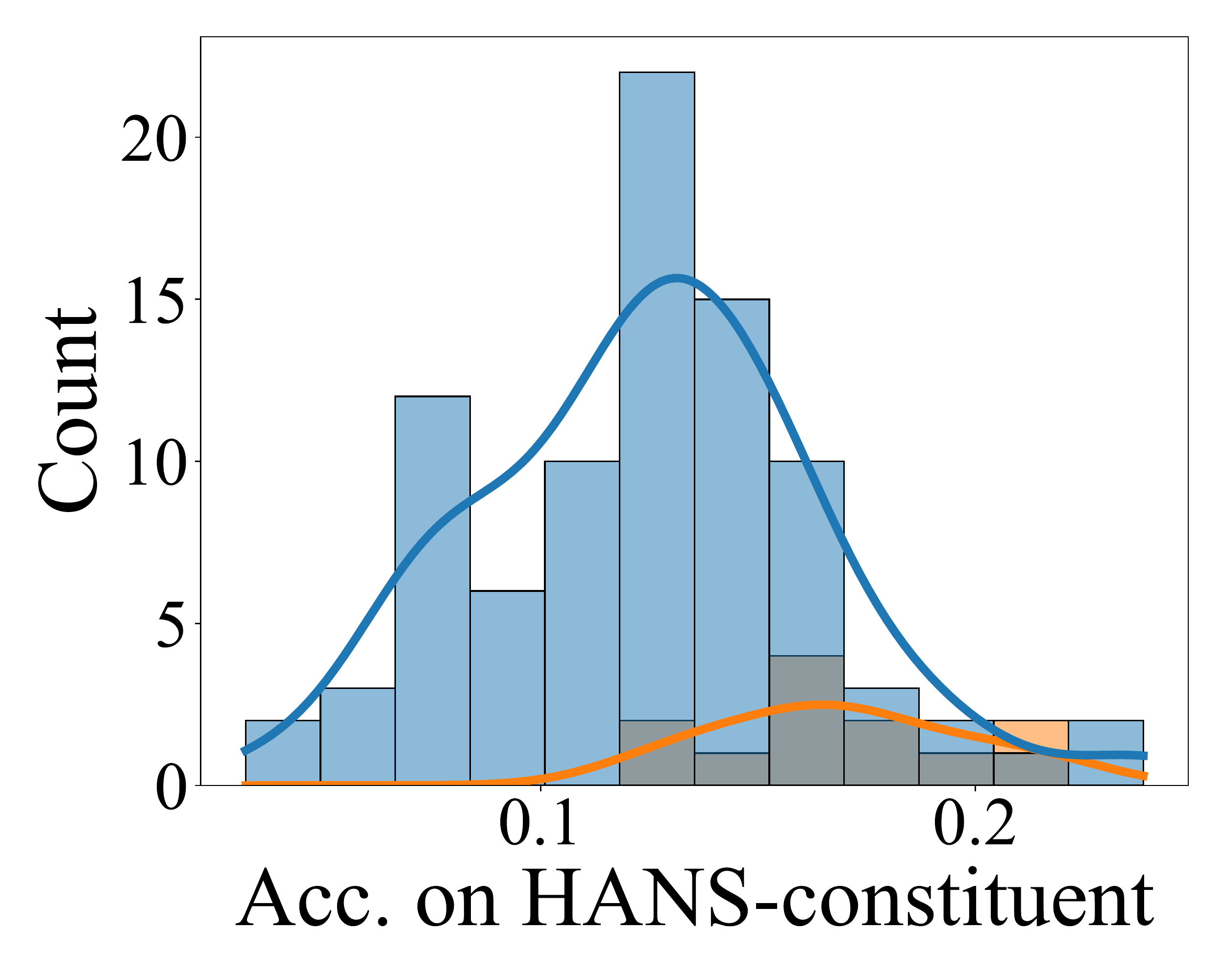}
    }
    \caption{Histogram of accuracy scores of the MNLI models. Cluster that relies on lexical overlap heuristics is in orange; cluster that generalizes to HANS-LO is blue.}
    \label{fig:distributions}
\end{figure}

\paragraph{QQP: } On QQP, we also find distinct clusters of linearly connected models (Fig.~\ref{fig:qqp_heatmap}). As in NLI, we find that cluster membership predicts generalization on PAWS-QQP (Fig.~\ref{fig:qqp_cluster}).

\begin{figure}
    \centering
    \subfigure{\label{fig:qqp_heatmap}{\includegraphics[height=0.39\textwidth]{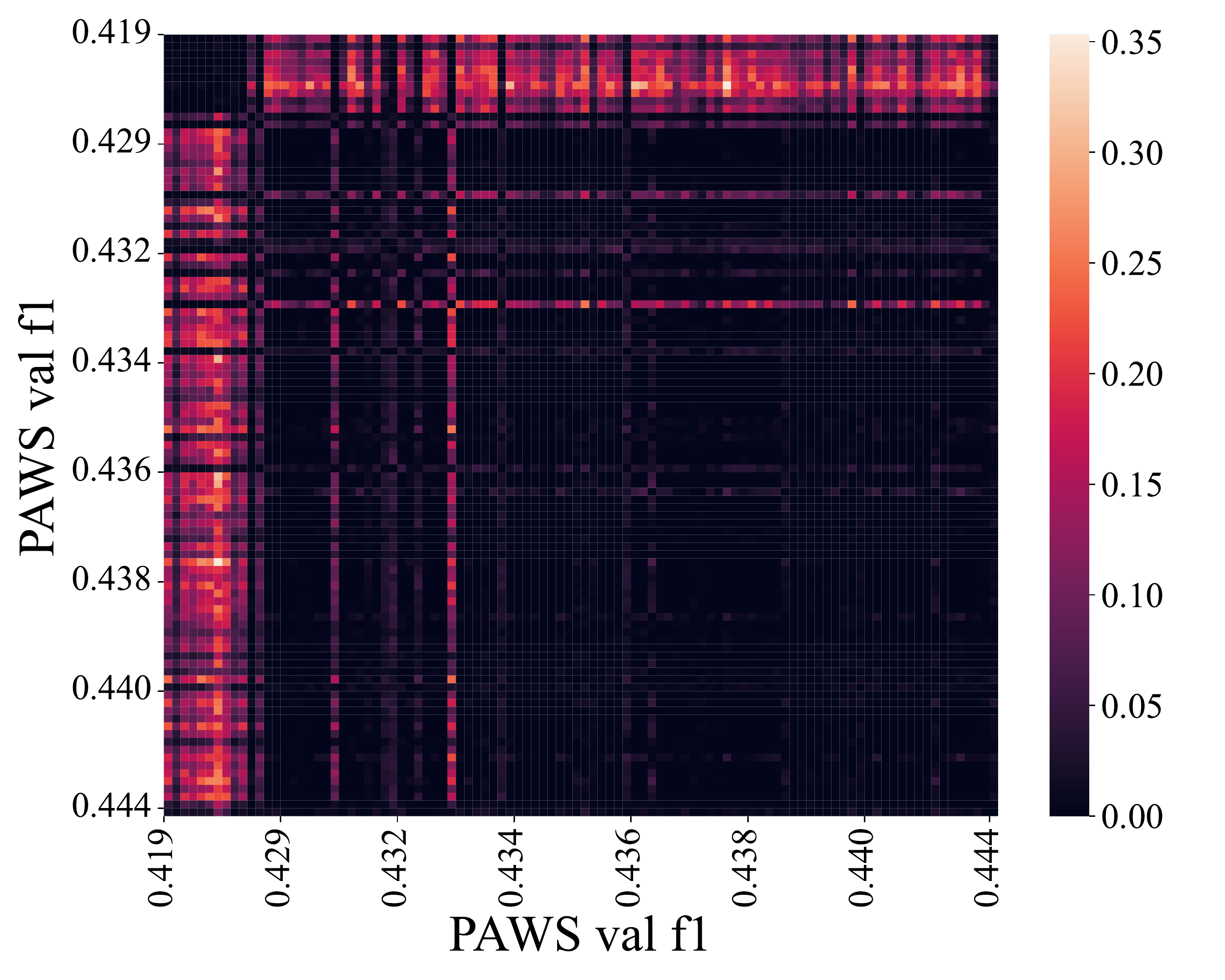}}}
    \hfill
    \subfigure{\includegraphics[height=0.39\textwidth]{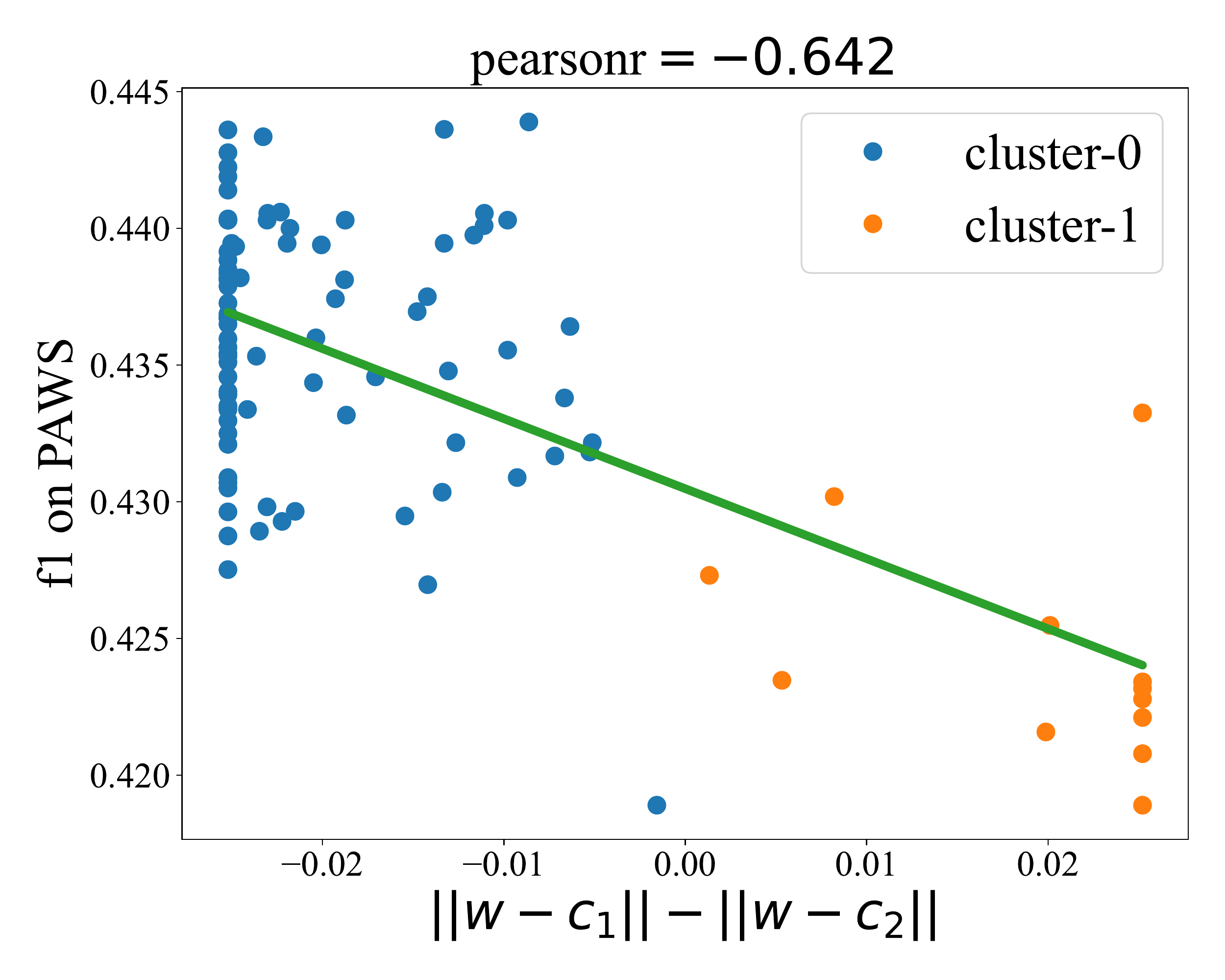}\label{fig:qqp_cluster}}
    \caption{Results from $\mathbf{CG}$ (over QQP validation loss) for QQP models. (a) $\mathbf{CG}$ heatmap with QQP models sorted in order of increasing performance on PAWS-QQP. (b) Among QQP models, cluster membership is highly predictive of PAWS-QQP performance. Least squares fit shown.}
    \label{fig:qqp}
\end{figure}

\subsection{Generalization basins trap training trajectories}
\label{sec:dynamics}

At this point we have aligned basins with particular generalization behaviors, but it is possible that the heuristic models just need to train for longer to switch to the generalizing basin. We find that this is not the case (Fig.~\ref{fig:movement_qqp}). Heuristic models that fall closer to the cluster boundary may drift towards the larger cluster later in training. However, models that are more central to the cluster actually become increasingly solidified in their cluster membership later in training.

These results have two important implications. First, they confirm that these basins constitute distinct local minima which can trap optimization trajectories. Additionally, our findings suggest that early on in training, we can detect whether a final model will follow a particular heuristic. The latter conclusion is crucial for any future work towards practical methods of directing or detecting desirable generalization strategies during training \citep{jastrzebski_catastrophic_2021}.

\begin{figure}
    \centering
    \includegraphics[width=0.82\textwidth]{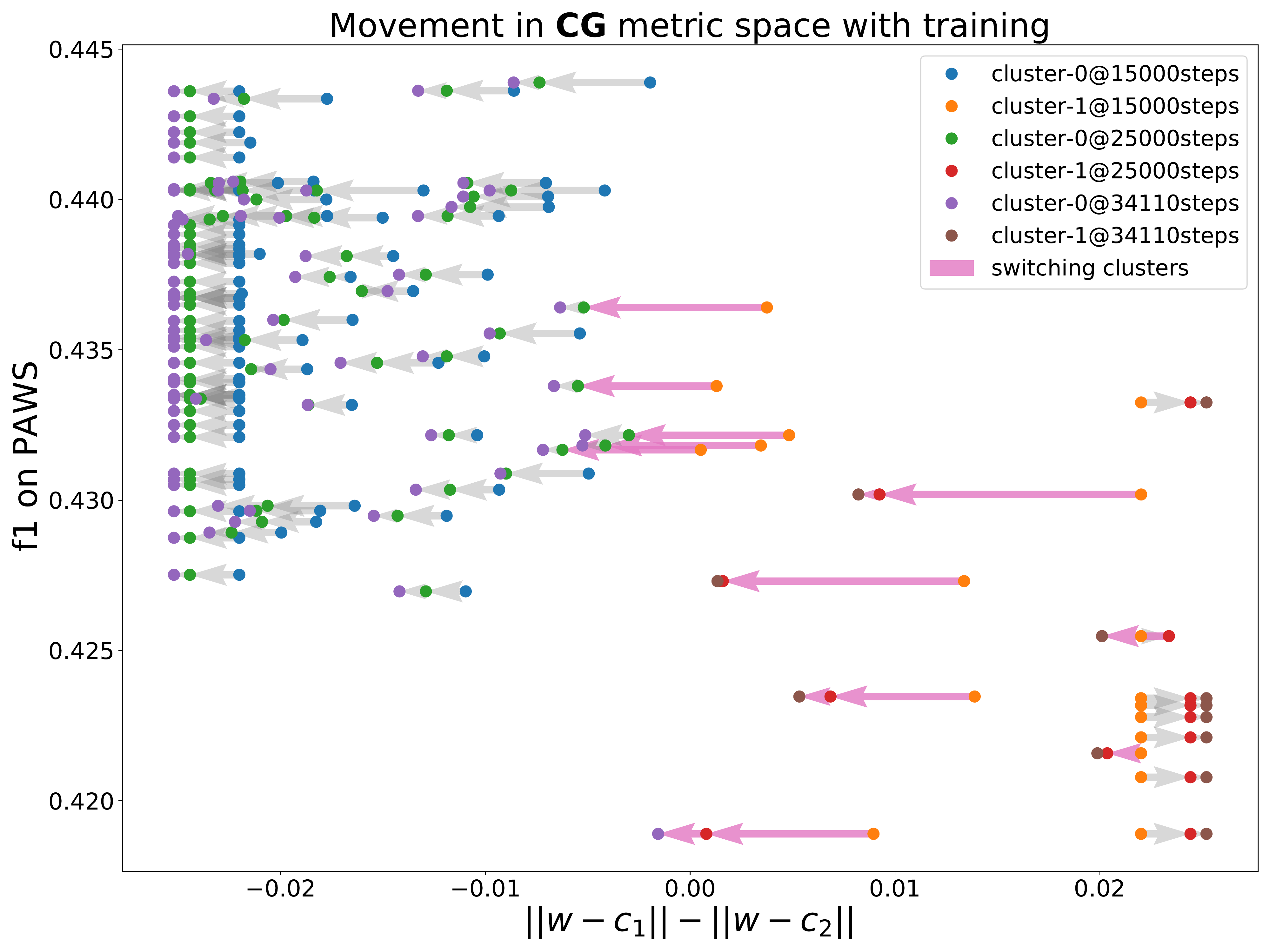}
    \caption{Movement between clusters during finetuning on QQP. Clustering uses the $\mathbf{CG}$ metric on the QQP validation loss surface. Heuristic models near the cluster boundary move towards the generalizing cluster, but central models in both clusters move closer to their respective centroids.}
    \label{fig:movement_qqp}
\end{figure}

\section{Related Work}

\paragraph{Connectivity of the Loss Landscape:} \citet{draxler_essentially_2019} and \citet{garipov_loss_2018} demonstrated that pairs of trained models are generally connected to each other by nonlinear paths of constant train and test loss. \citet{garipov_loss_2018} and, later, \citet{fort_large_2019} conceptualized these paths as high dimensional volumes connecting entire sets of solutions. \citet{goodfellow_qualitatively_2015} explored linear connections between model pairs by interpolating between them, while \citet{entezari_role_2021} and \citet{ainsworth_git_2022} suggested that in general, models that seem linearly disconnected may be connected after permuting their neurons. Our work explores models that are linearly connected without permutation, and is most closely linked with \citet{neyshabur_what_2020}, which found that models initialized from a shared pretrained checkpoint are connected. \citet{frankle_linear_2020} also found that pruned initial weights can lead to trained models that interpolate with low loss, if the pruned model generalizes well. These results, unlike ours, all focus on image classification. \citet{wortsman_model_2022} considers a similar setting to ours by finetuning with a linear classifier head, and develops weight ensembles that depend on assumptions of LMC. They find that such ensembling methods are effective in image classification settings, but less so in text classification.

\paragraph{Generalization:}
Diagnostic challenge sets are a common method for evaluating generalization behavior in a number of modalities. Imagenet has a number of corresponding diagnostic sets, from naturally-hard examples~\citep{koh_wilds_2021,hendrycks2021nae} to perturbed images~\citep{hendrycks2019robustness} to sketched renderings of objects~\citep{wang2019learning}. In NLP, diagnostic sets frequently focus on compositionality and grammatical behavior such as inflection agreement~\citep{kim-etal-2019-probing,sennrich-2017-grammatical}. Diagnostic sets in NLP can be based on natural adversarial data~\citep{gulordava_colorless_2018,linzen_assessing_2016}, constructed by perturbations and rephrases~\citep{sanchez-etal-2018-behavior,glockner-etal-2018-breaking}, or generated artificially from templates~\citep{rudinger-etal-2018-gender,zhao-etal-2018-gender}. NLI models in particular often fail on diagnostic sets with permuted word order~\citep{kim-etal-2018-teaching,naik-etal-2018-stress,mccoy_right_2019}, exposing a lack of syntactic behavior. Because we focus on text classification, we are able to interpret behavior on available rule-based diagnostic datasets in order to differentiate between the mechanical heuristics used by different models.

\paragraph{Function Diversity:}
Initializing from a single pretrained model can produce a range of in-domain generalization behaviors~\citep{Devlin2019BERTPO,phang_investigating_2020,sellam_multiberts_2021,mosbach_stability_2021}. However, variation on performance in diagnostic sets is even more substantial, whether those sets include tests of social biases~\citep{sellam_multiberts_2021} or unusual paraphrases~\citep{McCoy2020BERTsOA,zhou_curse_2020}. \citet{benton_loss_2021} found that a wide variety of decision boundaries were expressed within a low-loss volume, and \citet{somepalli_can_2022} further found that there is diversity in boundaries during OOD generalization, observed when inputs are far off the data manifold. Our work contributes to the literature on how diverse functions can have similar loss by linking sets of identically trained models to different OOD generalization behavior. We also expand beyond visualizing decision boundaries by focusing on how basins align with specific mechanistic interpretable strategies.

\section{Discussion and Future Work}

Our results stand in stark contrast to findings in image classification. Once neuron alignment\footnote{We used the methods of \citet{ainsworth_git_2022} to align neurons within each head across different training runs, finding alignment without any permutation. Therefore, our basins are likely to be separate even under neuron permutation (though there may be permutations we did not find), in contrast to their findings on CIFAR.} is accounted for, the linear mode connectivity literature \citep{entezari_role_2021,wortsman_model_2022,frankle_linear_2020,ainsworth_git_2022,neyshabur_what_2020} points overwhelmingly to the claim that SGD consistently finds the same basin in every training run. There are several reasons to be skeptical about the generality of this view outside of image classification. First, image classifiers may not rely much on global structure, instead shallowly composing local textures and curves together \citep{olah_building_2018}. In contrast, accurate syntactic parses are applied globally, as a single tree consistently describes the composition of meaning across a sentence. Second, existing results are in settings that are well-established to be nearly linear early in training \citep{nakkiran_sgd_2019}. Early basin selection may thus occur in a nearly convex optimization space, explaining why different runs consistently find the same basin. However, future work might dig further into image classification, exploring whether it is possible to identify different basins that avoid particular spurious correlations.

Within NLP, we have many more challenge sets for detecting spurious correlations~\citep{warstadt_learning_2020}. 
Beyond classification, sequence-to-sequence tasks also provide challenge sets \citep{kim_cogs_2020}. In language modeling, we can consider biases towards subject-verb number agreement or syntactic tests~\citep{linzen_assessing_2016,gulordava_colorless_2018,ravfogel_studying_2019}. While we identify three tasks with discrete basins, MNLI and QQP have a similar format (a binary label on the relationship between two sentences) and CoLA outlier models are extremely rare. A greater diversity of tasks could reveal how consistently basins align with the generalization strategies explored in existing diagnostic sets, or even identify more faithful distinctions between basin behaviors.

Future research may also focus on evaluating the strength and nature of the prior distribution over basins. Transfer learning priors range from the high entropy distribution provided by training from scratch to those pretrained models which seem to consistently select a single basin \citep{neyshabur_what_2020}.
Possible influences on basin selection, and therefore on generalization strategies, may include length of pretraining~\citep[][ see Appendix~\ref{sec:roberta}]{warstadt_learning_2020}, data scheduling, and architecture selection~\citep{somepalli_can_2022}. The strength of a prior towards particular basins may be not only linked to training procedure (Appendix~\ref{sec:distortion}), but also related to the availability of features in the pretrained representations~\citep{lovering_predicting_2020,hermann_what_2020,shah_pitfalls_2020}. This aspect of transfer learning is a particularly ripe area for theoretical work in optimization.

Another future direction is to detect early in training whether a basin has a desirable generalization strategy. For example, can we predict if a Transformer will use position information or act like a bag-of-words model? It is promising that $\mathbf{CG}$ on training loss is predictive of OOD generalization (Appendix~\ref{sec:train_only}), so measurements can be directly on the training set. Furthermore, a model tends to stay in the same cluster late in training (Section~\ref{sec:dynamics}), so final heuristics may be predicted early.

The split between generalization strategies can potentially explain results from the bimodality of CoLA models~\citep{mosbach_stability_2021} to wide variance on NLI diagnostic sets \cite{McCoy2020BERTsOA}. Because weight averaging can find parameter settings that fall on a barrier, we may even explain why weight averaging, which tends to perform well on vision tasks, fails in text classifiers~\citep{wortsman_model_2022}. Future work that distinguishes generalization strategy basins could improve the performance of such weight ensembling methods.



\section{Conclusions}

In one view of transfer learning, a  pretrained model may select a prior over generalization strategies by favoring a particular basin. \citet{neyshabur_what_2020} found that models initialized from the same pretrained weights are linearly connected, suggesting that basin selection is a key component of transfer learning. By considering performance on NLP tasks instead of image classification, we find that a pretrained model may not commit exclusively to a single basin, but instead favor a small set of them. Furthermore, we find that linear connectivity can indicate shared generalization strategies under domain shift, as evidenced by results on NLI, paraphrase, and linguistic acceptability tasks.

\section{Acknowledgements}

This work was supported by Samsung Advanced Institute of Technology (under the project Next Generation Deep Learning: From Pattern Recognition to AI) and NSF Award 1922658 NRT-HDR: FUTURE Foundations, Translation, and Responsibility for Data Science.

This collaboration was facilitated by ML Collective. 

\bibliographystyle{abbrvnat}
\bibliography{main}

\newpage
\appendix

\section{Linguistic Acceptability}
\label{sec:cola}

The Corpus of Linguistic Acceptability~\citep[CoLA; ][]{warstadt2018neural} is a set of acceptable and unacceptable English sentences collected from the linguistics literature. Linguistics has a longstanding practice of studying minimal changes that render sentences ungrammatical; one CoLA example is the pair of sentences ``Betsy buttered the toast'' (acceptable) and ``Betsy buttered at the toast'' (unacceptable).

CoLA includes an ID val/test set, where the examples are taken from the same linguistics papers that the training set uses. However, it also includes an OOD diagnostic val/test set. The diagnostic sets are taken from a different set of linguistics papers, so in order for a model to perform well on CoLA-OOD, it must transfer a general ability to recognize unacceptable English sentences, rather than simply learning the set of acceptability rules described in the ID sources.

\subsection{Experimental details}

We found that default settings on the HuggingFace~\citep{wolf-etal-2020-transformers} training script\footnote{\url{https://github.com/huggingface/transformers/blob/main/examples/flax/text-classification/run_flax_glue.py}} resulted in more pronounced barriers between models, compared to the Google script we used for NLI and QQP.\footnote{The major difference between scripts appears to be the lack of different initializations of classification head between huggingface runs. That is, different data order is the only source of SGD noise in HuggingFace runs. However, it is likely that the presence or absence of a second cluster during the sweep is due to random chance, given that we see only a single model out of 48 falling into the outlier cluster in these results.} Because our goal is to study the relationship between barriers and generalization, we therefore chose to use Huggingface for our CoLA analysis. Like in our other experiments, we kept the default hyperparameters, which differ slightly from the Google script.
The CoLA models were trained for 6 epochs with a learning rate of $2 \times 10^{-5}$, a batch size of 32 samples, and no weight decay. This script uses the AdamW\citep{adamw} optimizer too, with a linear learning rate decay schedule but no warm-up.

\subsection{Clustering}

\begin{figure}[hb]
    \centering
    \subfigure{
        \includegraphics[width=0.45\textwidth]{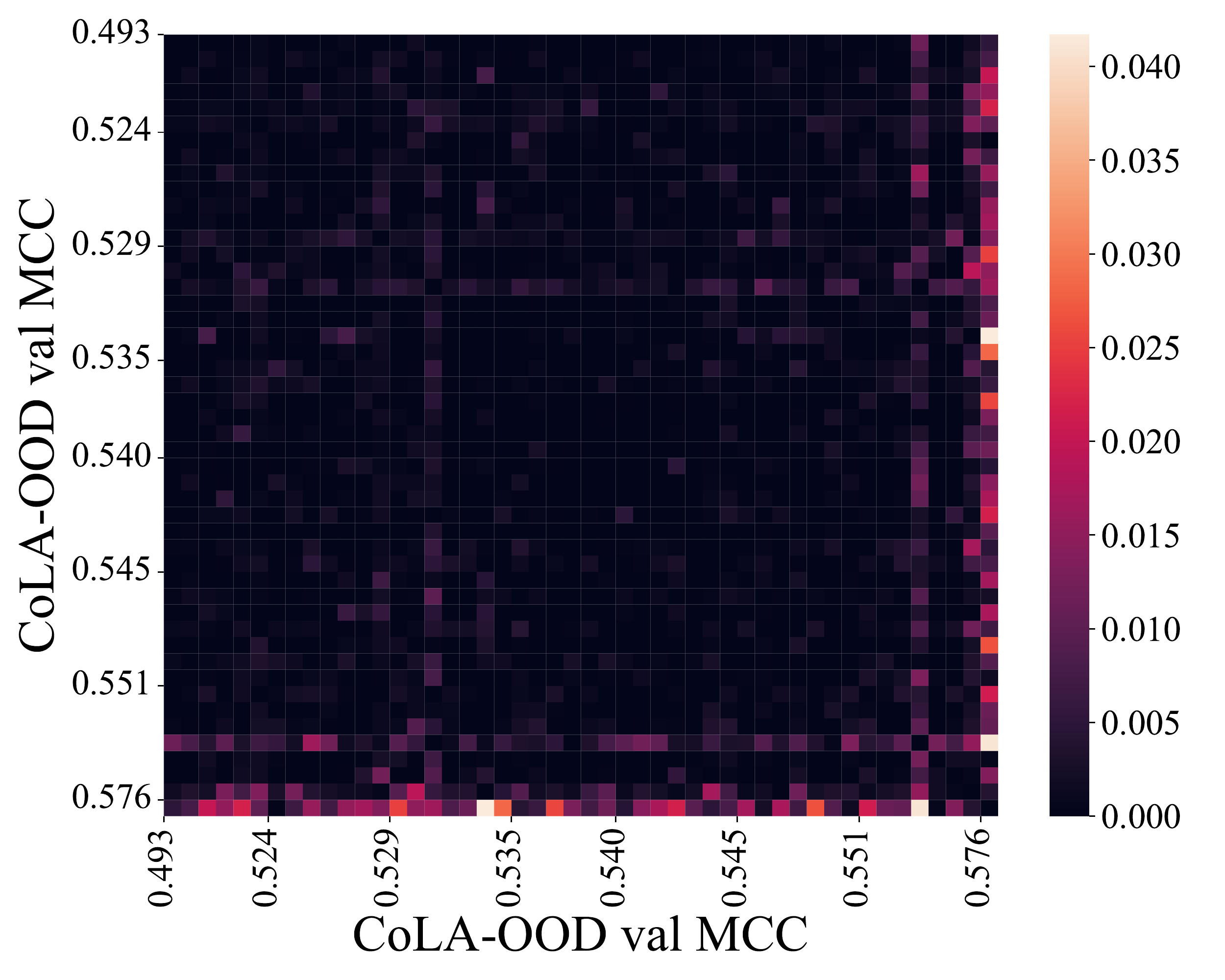}
        \label{fig:cola_barriers}
    }
    \hfill
    \subfigure{
        \includegraphics[width=0.45\textwidth]{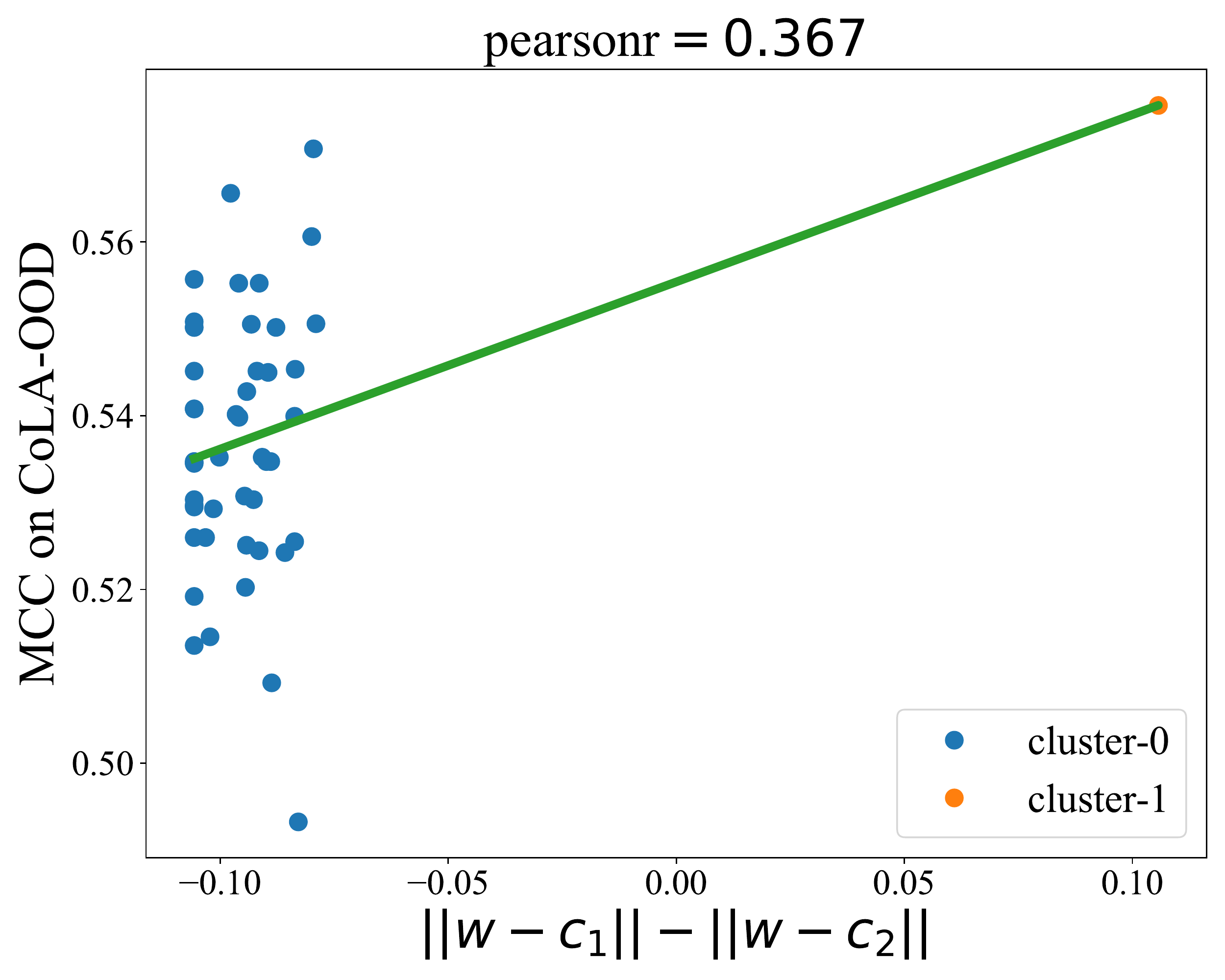}
        \label{fig:cola_scatter}}

    \caption{Results from $\mathbf{CG}$ (over CoLA in-domain validation loss) for CoLA models.  (a) $\mathbf{CG}$ heatmap on CoLA models, sorted by OOD validation. (b) A scatter-plot of cluster membership versus performance on CoLA-OOD validation set.}

    \label{fig:cola}
\end{figure}

In CoLA, there are very few barriers between finetuned models. A single model out of the 48 finetuned accounted for all substantial interpolation convexity gaps (Fig.~\ref{fig:cola_barriers}), thus forming its own one-point cluster when using $\mathbf{CG}$ as a distance metric for spectral clustering. This outlier model outperformed all others on OOD generalization (Fig.~\ref{fig:cola_scatter}), suggesting that CoLA is another task where models with different generalization behavior are disconnected.

\section{Theoretical result on convexity gaps}
\label{sec:proof}

\begin{theorem}
    An $\epsilon$-basin will have $\mathbf{CG}(w_1, w_2) \leq \epsilon$ for every pair of models $w_1,w_2$ on its surface.
\end{theorem}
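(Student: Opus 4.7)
The plan is to observe that the definition of an $\epsilon$-convex basin, specialized to the two-point case, already delivers exactly the bound that $\mathbf{CG}$ asks for, and that sub-segments of a segment in $S$ automatically lie in $S$ by convexity. Concretely, given $w_1, w_2$ on the surface of an $\epsilon$-convex basin $S$, I would fix arbitrary $\gamma, \beta \in [0,1]$ and set
\[
u = \gamma w_1 + (1-\gamma) w_2, \qquad v = \beta w_1 + (1-\beta) w_2.
\]
Since $S$ is convex and $w_1, w_2 \in S$, we immediately have $u, v \in S$, so the endpoints of any sub-segment of $[w_1, w_2]$ are legitimate points in $S$ at which the $\epsilon$-Jensen inequality is allowed to be invoked.

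Next I would apply Equation~\ref{eqn:epsilon_convex_def} in its two-point form to $u$ and $v$ with weights $\alpha$ and $1-\alpha$:
\[
\mathcal{L}\bigl(\alpha u + (1-\alpha) v\bigr) \leq \epsilon + \alpha \mathcal{L}(u) + (1-\alpha)\mathcal{L}(v)
\]
for every $\alpha \in [0,1]$. Rearranging gives
\[
\mathcal{L}\bigl(\alpha u + (1-\alpha)v\bigr) - \bigl(\alpha \mathcal{L}(u) + (1-\alpha)\mathcal{L}(v)\bigr) \leq \epsilon.
\]
Taking the supremum over $\alpha$ on the left yields $\mathbf{BH}(u,v) \leq \epsilon$ directly from the definition in Equation~\ref{eqn:bh_def}. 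Finally, taking the supremum over the remaining free parameters $\gamma, \beta \in [0,1]$ in the definition of $\mathbf{CG}$ (Equation~\ref{eqn:cg_def}) gives $\mathbf{CG}(w_1, w_2) \leq \epsilon$, as required.

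The argument is essentially a one-line reduction, so there is no real obstacle; the only point that deserves care is making sure that we apply the $\epsilon$-convexity hypothesis to the sub-segment endpoints $u, v$ rather than to the basin endpoints $w_1, w_2$. One might be tempted to invoke Equation~\ref{eqn:epsilon_convex_def} with $w_1, w_2$ directly and the induced weights $c_1 = \alpha\gamma + (1-\alpha)\beta$, $c_2 = 1 - c_1$; that route controls $\mathcal{L}(\alpha u + (1-\alpha) v)$ by $c_1 \mathcal{L}(w_1) + c_2 \mathcal{L}(w_2) + \epsilon$, but this is not the same upper bound as $\alpha \mathcal{L}(u) + (1-\alpha)\mathcal{L}(v) + \epsilon$, and bridging the two requires a \emph{lower} bound on $\mathcal{L}(u)$ and $\mathcal{L}(v)$ that $\epsilon$-convexity does not give. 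Applying the hypothesis at $u, v$ sidesteps this entirely, which is why the full $n$-point strength of the basin definition is actually unnecessary for the theorem.
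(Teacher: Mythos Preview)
Your proof is correct and follows essentially the same route as the paper's: apply the two-point $\epsilon$-Jensen inequality to the sub-segment endpoints, rearrange to bound $\mathbf{BH}$, and take the supremum over $\gamma,\beta$ to bound $\mathbf{CG}$. If anything, you are slightly more explicit than the paper in noting that convexity of $S$ is what guarantees $u,v\in S$, a step the paper leaves implicit when it asserts ``$\mathbf{CG}$ is the supremum of $\mathbf{BH}$ over elements within the $\epsilon$-convex basin only.''
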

\begin{proof}
    Recall the definition of convexity gap as the maximum value of the barrier height of any segment $\theta_1, \theta_2$ along the interpolation between $w_1$ and $w_2$ from Equation~\ref{eqn:cg_def}:
    \begin{equation}
        \mathbf{CG}(w_1, w_2) = \sup_{\gamma, \beta} \mathbf{BH}(\gamma w_1+(1-\gamma)w_2, \beta w_1+(1-\beta)w_2)\qquad \gamma, \beta \in [0,1]
    \end{equation}
    As we are in an $\epsilon$-basin, the defining inequality from Equation~\ref{eqn:epsilon_convex_def} holds $\forall \theta_1, \theta_2 \in \epsilon-$convex basin :
    \begin{equation}
        \mathcal{L}(\sum_{k=1}^n \alpha_k \theta_k) \leq \epsilon + \sum_{k=1}^n \alpha_k \mathcal{L}(\theta_k)
    \end{equation}
    Applying this for $n=2$:
    \begin{equation}
        \mathcal{L}(\alpha_1 \theta_1+(1-\alpha_1) \theta_2) - (\alpha_1\mathcal{L}( \theta_1)+(1-\alpha_1)\mathcal{L}(\theta_2))\leq \epsilon \qquad \forall \theta_1, \theta_2 \in \epsilon\text{-basin}
    \end{equation}
    Hence the supremum of the quantity on LHS is also $\leq \epsilon$. Seeing the definition of $\mathbf{BH}$ from Equation~\ref{eqn:bh_def}, we immediately see that:
    \begin{equation}
        \mathbf{BH}(\theta_1, \theta_2) \leq \epsilon \qquad \forall \theta_1, \theta_2 \in \epsilon\text{-basin}
    \end{equation}
    As $\mathbf{CG}$ is the supremum of $\mathbf{BH}$ over elements within the $\epsilon-$convex basin only, we have:
    \begin{equation}
        \mathbf{CG}(w_1, w_2) \leq \epsilon \qquad \forall w_1, w_2 \in \epsilon\text{-basin}
    \end{equation}

\end{proof}

\section{HANS performance on subsequence heuristics}
\label{sec:subsequence}

Models that perform poorly on subsequence heuristics tend to fall in the bag-of-words basin. However, the clusters are less pronounced than for either constituent or lexical overlap heuristics (Fig.~\ref{fig:heatmap_hans_subsequence}).

\begin{figure}[hb]
    \centering
    \includegraphics[width=0.7\textwidth]{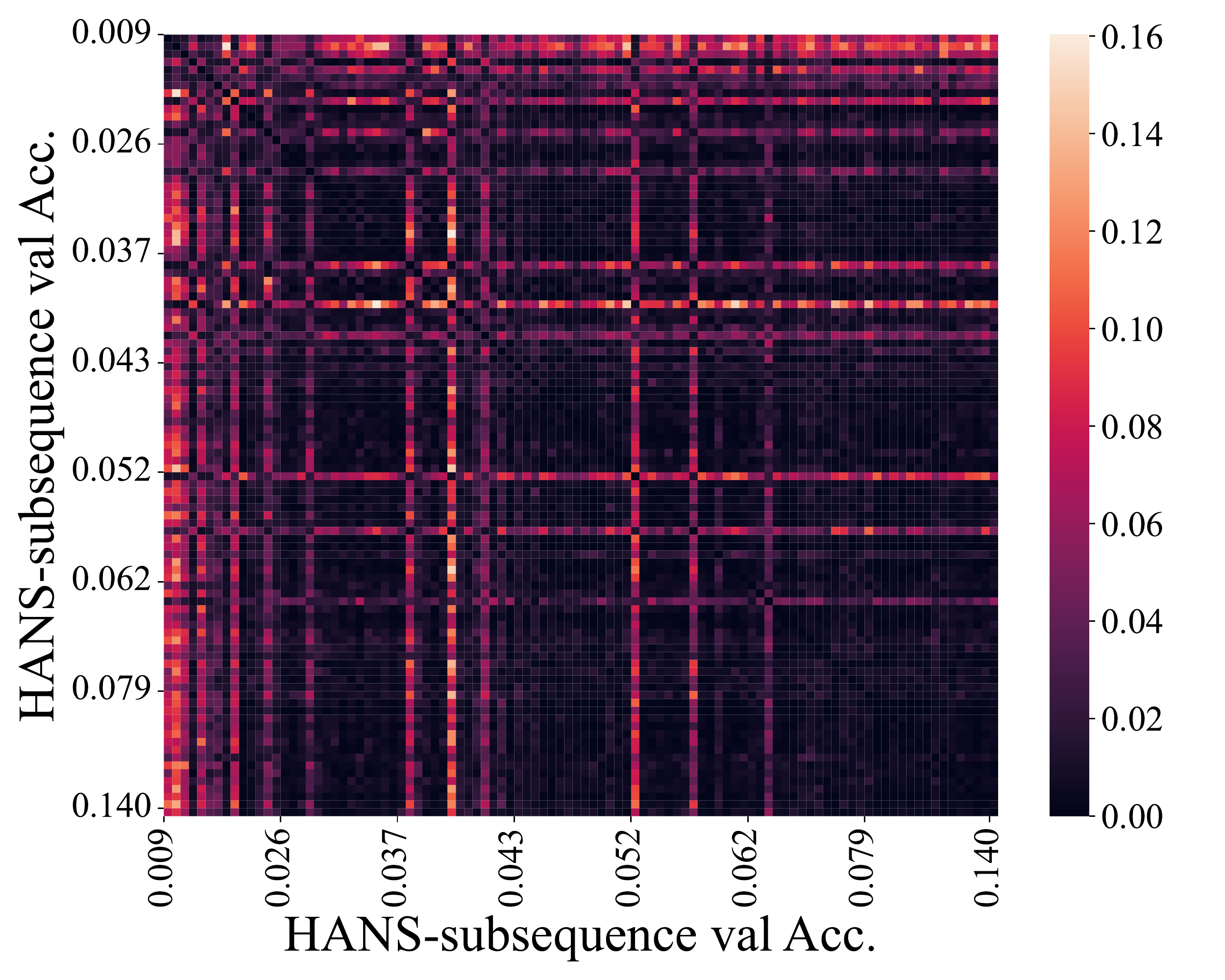}
    \caption{$\mathbf{CG}$ (over in-domain MNLI validation loss) for MNLI model pairs, sorted by increasing performance on HANS-subsequence.}
    \label{fig:heatmap_hans_subsequence}
\end{figure}

\section{Flexible mode connectivity}

While our results strongly suggest that linear mode connectivity breaks down between heuristic and generalizing models, they should nonetheless be connected, at least by a nonlinear path of constant training loss \citep{draxler_essentially_2019,benton_loss_2021}. We consider the behavior of HANS generalization along this path, which maintains near-constant ID loss. These paths were easy to identify with Riemannian curves and segmented lines~\citep{garipov_loss_2018}, and exhibited poor HANS-LO generalization the further along the path the parameters fall, as shown in Fig.~\ref{fig:hans_riemann}.

The results here show an inversion of the pattern exhibited on HANS-LO loss in the linear interpolation case: whereas Fig.~\ref{fig:linear_hans} shows heuristic-to-generalizing connections to exhibit peaks in ID loss but no corresponding spikes in OOD loss, here we find constant-loss paths which exhibit peaks in HANS-LO loss. It appears that gradient descent finds generalizing models in spite of the fact that they exist in a constant-loss multidimensional volume which exhibits a wealth of heuristic models.

\begin{figure}[ht]
    \centering
    \includegraphics[width=0.7\textwidth]{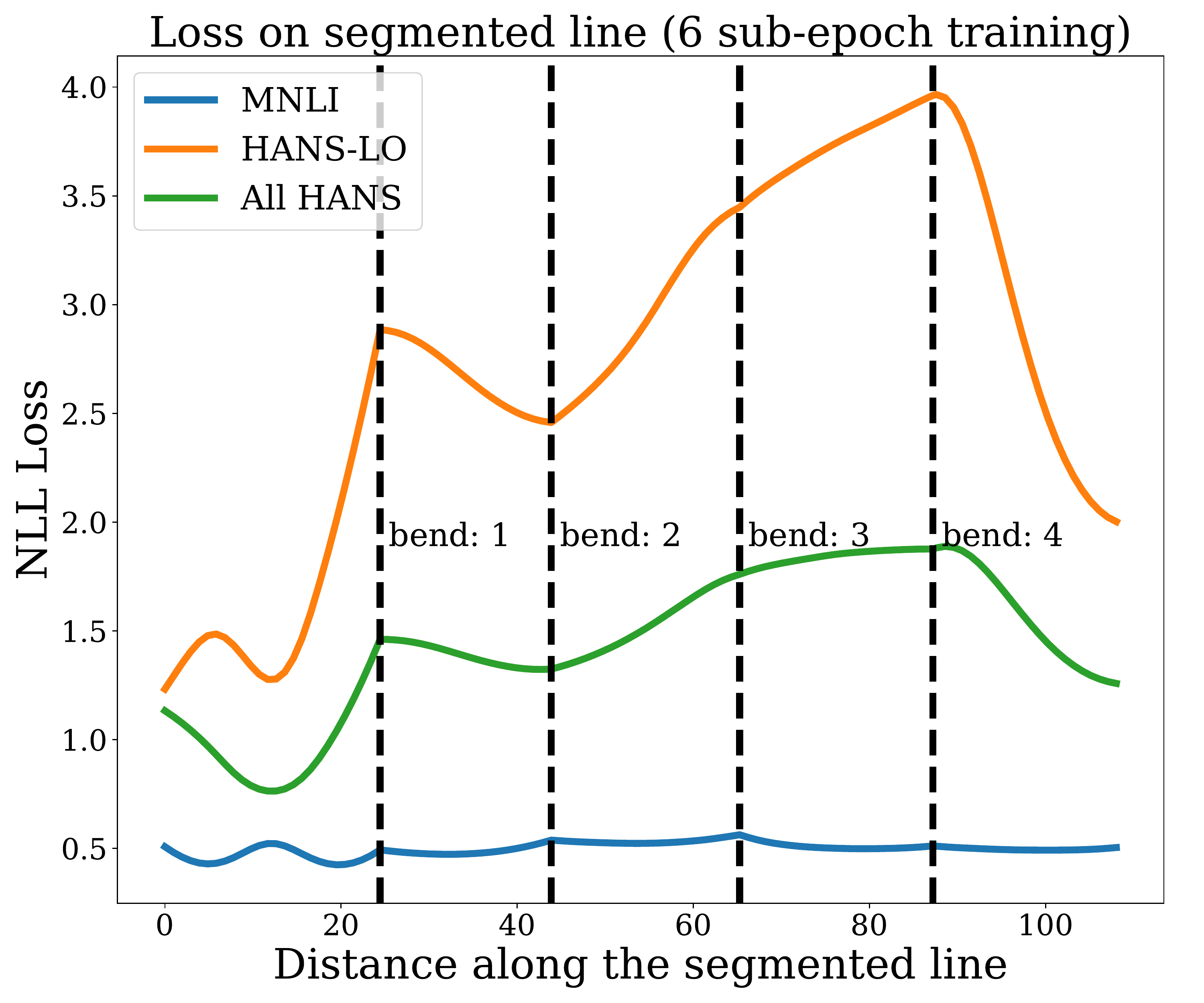}
    \caption{Loss along the segmented curve (after 6 subepochs) between a heuristic and generalizing NLI model. 1 sub-epoch is equivalent to $\sim 120$ training steps with a batch size of $128$, and a learning rate of $8\times 10^{-5}$.}
    \label{fig:hans_riemann}
\end{figure}

\section{Correlations between performance on HANS subsets}
\label{sec:similarity}

\begin{figure}[ht]
    \centering
    \includegraphics[width=0.8\textwidth]{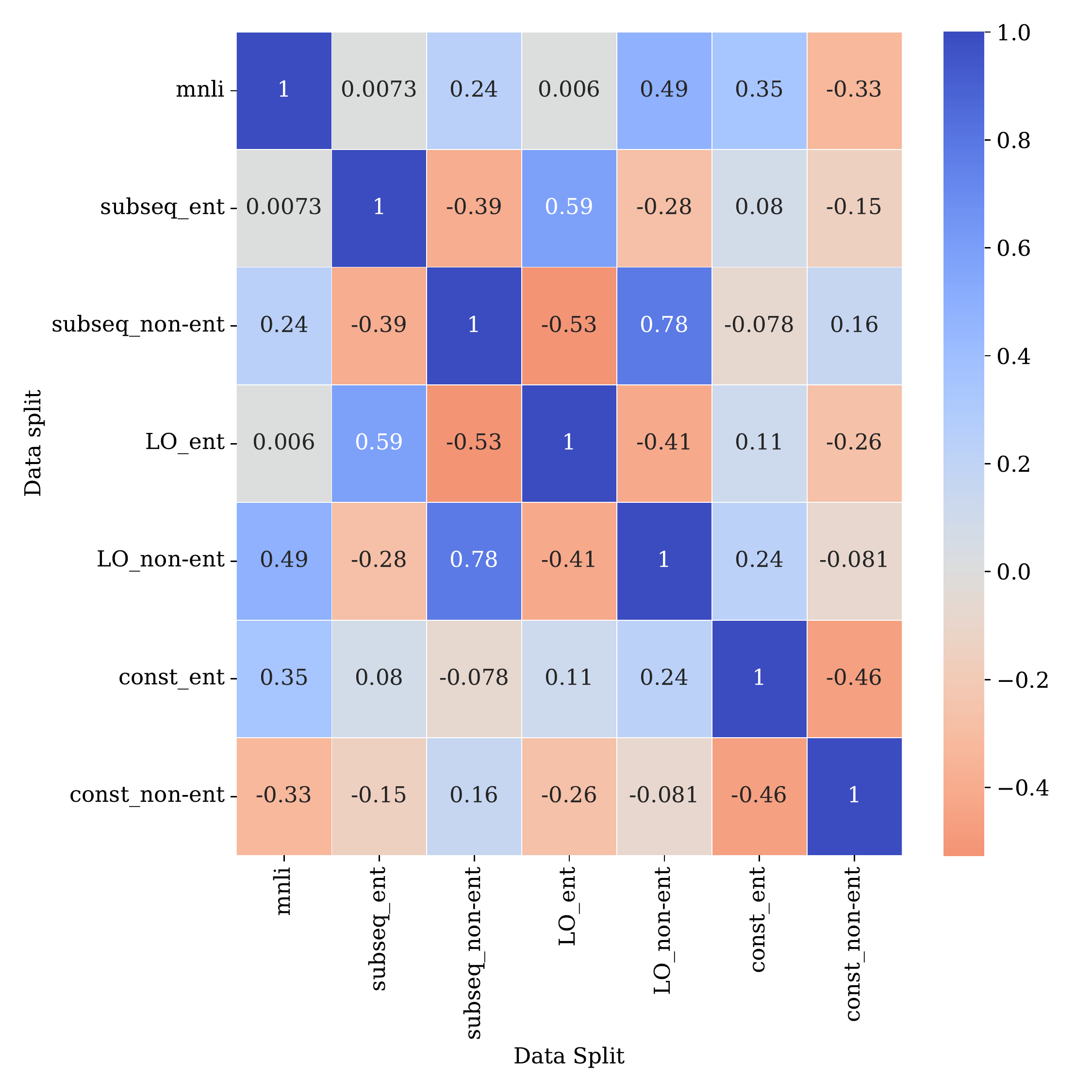}
    \caption{Correlation between accuracy on various ID and OOD diagnostic NLI test sets. MNLI refers to the MNLI validation set. Other sets were taken from HANS validation sets for subsequence, constituent, and lexical overlap heuristics. Both entailing and non-entailing subsets are included.}
    \label{fig:corrs}
\end{figure}

As provided in Fig.~\ref{fig:corrs}, performance on HANS-LO non-entailing, which reflects an aversion to the lexical overlap heuristic, is positively correlated with ID accuracy. However, in the case of HANS-constituent a model that successfully averts the constituent overlap heuristic is likely to perform poorly ID and on most diagnostic sets. We therefore conjecture that models which avoid constituent overlap heuristics are likely failing to acquire syntax, rather than effectively acquiring better generalization strategies.

Performance on the subsequence heuristic diagnostic set is positively correlated with performance on both LO and constituent heuristic diagnostic sets, as it includes aspects of both other heuristics. For every diagnostic set heuristic, performance on the non-entailing subset (i.e., examples that violate the heuristic) is negatively correlated with performance on the entailing subset (i.e., examples that adhere to the heuristic).

\section{Experimental details}
\label{sec:exp_app}

\subsection{Finetuning the models}

The QQP models were trained for 3 epochs, with a learning rate of $2 \times 10^{-5}$, a batch size of 32 samples and a weight decay of $0.01$ from the $\texttt{bert-base-uncased}$\footnote{\url{https://storage.googleapis.com/bert_models/2018_10_18/uncased_L-12_H-768_A-12.zip}} pre-trained checkpoint using the google script.\footnote{\url{https://github.com/google-research/bert}} This script uses the AdamW~\citep{adamw} optimizer with a warm-up during the first $10\%$ of the training followed by a linear schedule for weight decay. Because these hyperparameters are the recommended defaults for BERT training, they were also used by \citet{McCoy2020BERTsOA} to train the MNLI model set we analyze.

\subsection{Interpolations}
In order to evaluate the interpolation between a pair of models, we sample a random set of 512 samples from the target dataset and then test each of the interpolated models on this same sample. For an interpolation between two fine-tuned models $\theta_1$ and $\theta_2$, we evaluate the models with interpolation coefficients $\alpha$ (Eqn~\ref{eqn:lin_interpol}) at intervals of $\frac{1}{10}$ on the line joining $\theta_1$ and $\theta_2$. The resulting plots are shown in Fig.~\ref{fig:linear_hans} and Fig.~\ref{fig:linear_hans_acc}. We calculate the convexity gap $\mathbf{CG}(\theta_1, \theta_2)$ is using the loss values of these interpolated models.

In order to gauge the effect of number of samples, we repeated interpolation experiments on MNLI using only 256 samples (instead of 512). Using a smaller sample reduced the magnitude of correlation coefficient in Fig.~\ref{fig:hans_cluster_corr_train} to $\rho=-0.65$, but the cluster assignments remain the same.

\subsection{Model Evaluations}

When evaluating the performance of a finetuned model (e.g., to sort the models on the linear connectivity heatmaps, or to select the top and bottom 5 models for Fig.~\ref{fig:linear_hans} and Fig.~\ref{fig:linear_hans_acc}), we use the whole dataset available. These include the entirety of:
\begin{itemize}
    \item All non-entailing samples of a given heuristic from the $\texttt{test}$ split of HANS.
    \item The $\texttt{dev\_and\_test}$ split of PAWS-QQP.
    \item The entire QQP and MNLI validation sets.
\end{itemize}

As Feather-BERTs have been finetuned on MNLI, to evaluate them on HANS tasks, we would need to convert the logits for the three MNLI classes (\texttt{entailment}, \texttt{neutral}, \texttt{contradiction}) to logits for the two HANS classes(\texttt{entailing} and \texttt{non-entailing}). We convert these labels by taking the logit for $\texttt{non-entailing}$ class as the maximum of the logits of the MNLI $\texttt{contradiction}$ and $\texttt{neutral}$ classes, and taking the logit for $\texttt{entailing}$ class of HANS as the logit of $\texttt{entailing}$ class of MNLI.

\subsection{Computing sharpness}
\label{sec:sharp_app}
For calculating $\epsilon$-sharpness of a model, we use the definition of \citet{Keskar2017OnLT}:
\begin{equation}
    \label{eqn:sharpness}
    \phi_{x,f}(\epsilon, A) := \frac{\max_{y\in \mathcal{C}_\epsilon}(f(x+Ay))-f(x)}{1+f(x)}\times 100
\end{equation}
where $f$ is the loss function, $x \in \mathbb{R}^n$ are the original parameters of the model, $A \in \mathbb{R}^{n\times p}$ is a matrix that restricts the calculation of $\epsilon-$sharpness to a subspace of the parameter space, and
\begin{equation}
    \label{eqn:epsilon_box}
    \mathcal{C}_\epsilon = \{z \in \mathbb{R}^p : -\epsilon(|(A^+x)_i|+1)\leq z_i \leq \epsilon(|(A^+x)_i|+1) \forall i \in [p]\}
\end{equation}

Following the defaults in \citet{Keskar2017OnLT}, we take $A$ as $\mathbb{I}_{n\times n}$. Using $32768$ samples from MNLI $\texttt{validation\_matched}$, we evaluate the initial loss $f(x)$. We use the same samples to compute the maximum loss $\max_{y\in \mathcal{C}_\epsilon}f(x+Ay)$, using an SGD optimizer with a learning rate of $8 \times 10^{-5}$, a batch size of $32$, and accumulating gradient over 4 batches. In order to compute maximum loss, we perform a total of $8192$ gradient updates, each followed by clamping of the weights within $\mathcal{C}_\epsilon$. We set $\epsilon$ to $1 \times 10^{-5}$.

\subsubsection{Computational resources}

Finetuning 100 QQP models cost around 500 GPU-hours and 48 CoLA models cost around 10 GPU-hours. Each $100\times100$ interpolation and evaluation consumed 114 GPU hours; these were performed for both QQP (at three stages during finetuning, for Fig.~\ref{fig:movement_qqp}) and MNLI. The $48\times48$ interpolation for CoLA cost about 28 GPU-hours. The experiments add up to a total cost of approximately 994 GPU-hours on a mix of NVIDIA RTX8000 and V100 nodes.

\section{Training loss}
\label{sec:train_only}

Even without any validation or test set to analyze, we can use the training set alone to make predictions about a model's generalization strategies. Fig.~\ref{fig:train_only} shows a strong relation between training loss convexity gaps and performance on HANS-LO. Future work can develop methods for predicting generalization early based entirely on training behavior.

\begin{figure}
    \centering
    \subfigure[Clusters are based on convexity gaps in the MNLI training loss surface. Best squares fit shown.]{
        \includegraphics[width=0.45\textwidth]{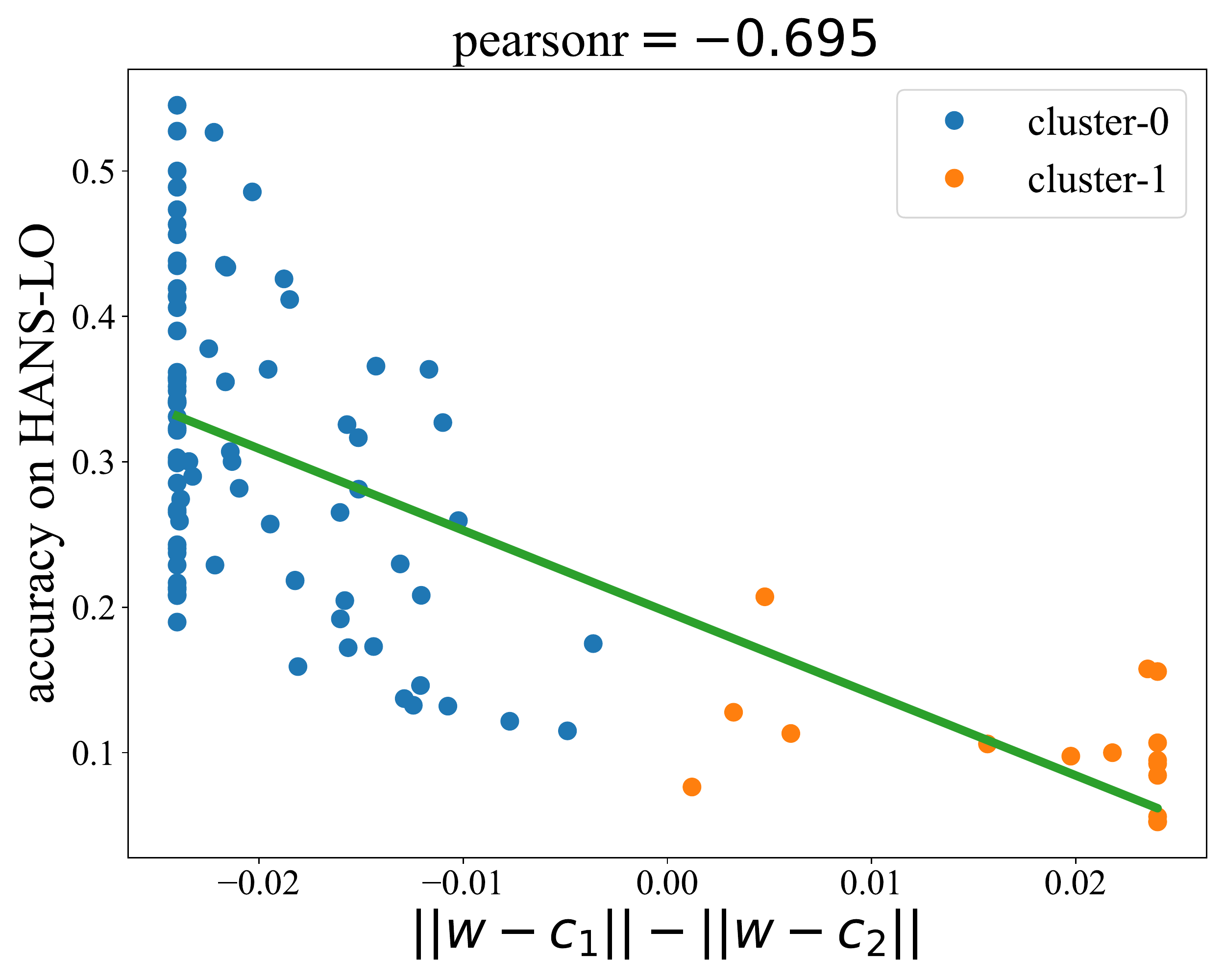}
    }
    \hfill
    \subfigure[Color indicates convexity gap. Models on axes are sorted according to OOD performance.]{
        \includegraphics[width=0.45\textwidth]{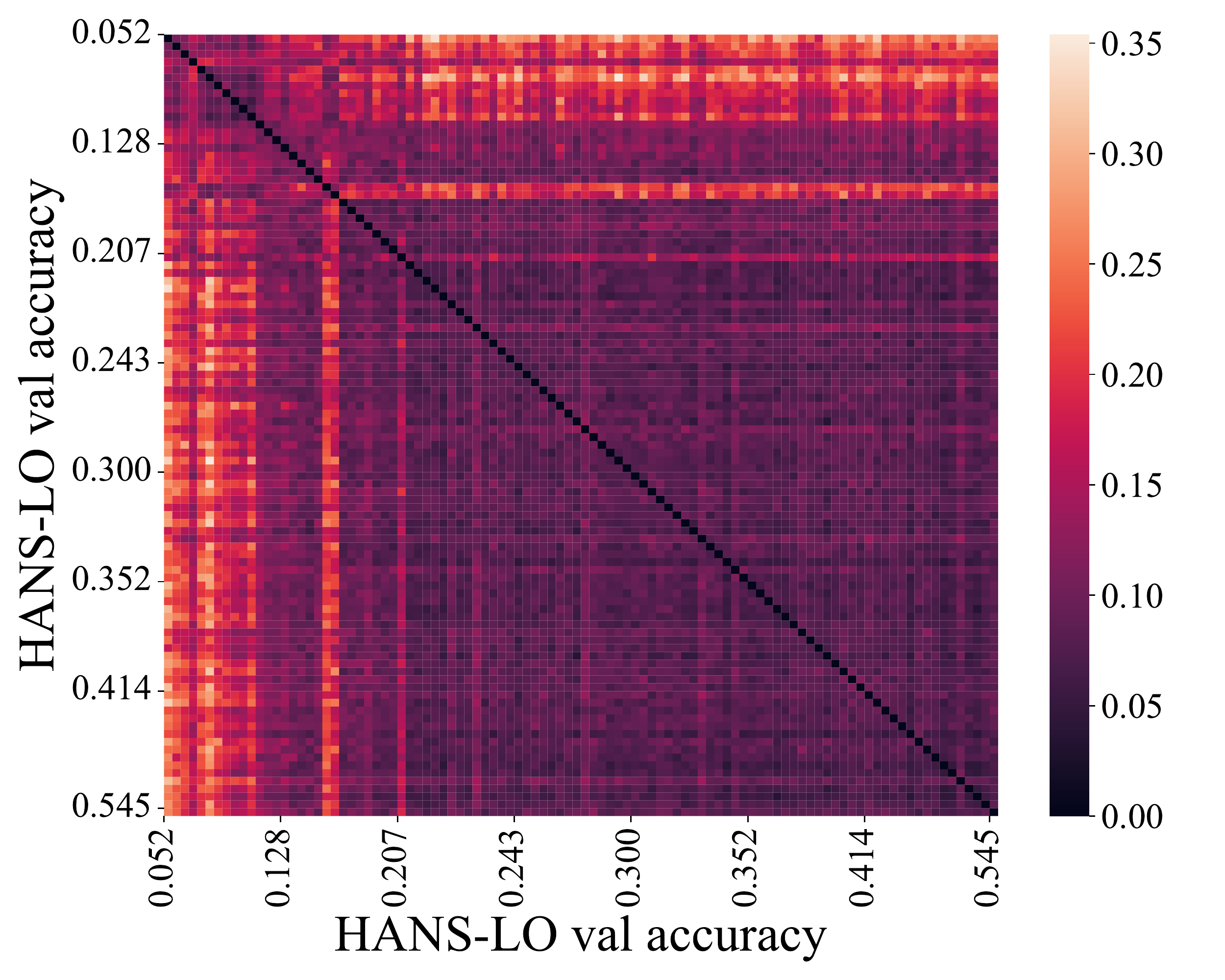}
    }

    \caption{Relationship between $\mathbf{CG}$ on the MNLI \textit{training loss surface} and accuracy on the HANS-LO diagnostic set.}
    \label{fig:train_only}
\end{figure}

\section{In-Domain Generalization}
\label{sec:indomain}

\begin{table}[ht]
    \centering
    \begin{tabular}{c|c| c| rrrr|rr}
        train & test     & $f$ & $\mu_{C_1}$ & $\sigma_{C_1}$ & $\mu_{C_2}$ & $\sigma_{C_2}$ & $\frac{\mu_{C_1} - \mu_{C_2}}{\sigma_{C_1}}$ & $\frac{\mu_{C_1} - \max_{w \in C_2} f(w)}{\sigma_{C_1}}$ \\
        \hline
        MNLI  & MNLI     & acc & 0.844       & 0.002          & 0.839       & 0.002          & 2.50                                         & 1.00                                                     \\
        MNLI  & HANS-LO  & acc & 0.301       & 0.110          & 0.096       & 0.033          & 1.86                                         & 1.30                                                     \\
        QQP   & QQP      & F1  & 0.879       & 0.001          & 0.872       & 0.002          & 7.00                                         & 5.00                                                     \\
        QQP   & PAWS-QQP & F1  & 0.436       & 0.005          & 0.424       & 0.004          & 2.40                                         & 0.60                                                     \\
        CoLA  & CoLA ID  & MCC & 0.601       & 0.017          & 0.600       & N/A            & 0.06                                         & 0.06                                                     \\
        CoLA  & CoLA OOD & MCC & 0.537       & 0.015          & 0.576       & N/A            & -2.60                                        & -2.60
    \end{tabular}
    \vspace{3mm}
    \caption{Cluster mean, standard deviation, and distribution overlap for all tasks on both ID and OOD performance under metric $f$. $C_1$ is the larger cluster on each task. On CoLA, $C_2$ contains only a single model.}
    \label{tab:distributions}
\end{table}

\begin{figure}[ht]
    \centering
    \subfigure[Clusters are based on convexity gaps in the MNLI validation loss surface. Best squares fit shown.]{
        \includegraphics[width=0.45\textwidth]{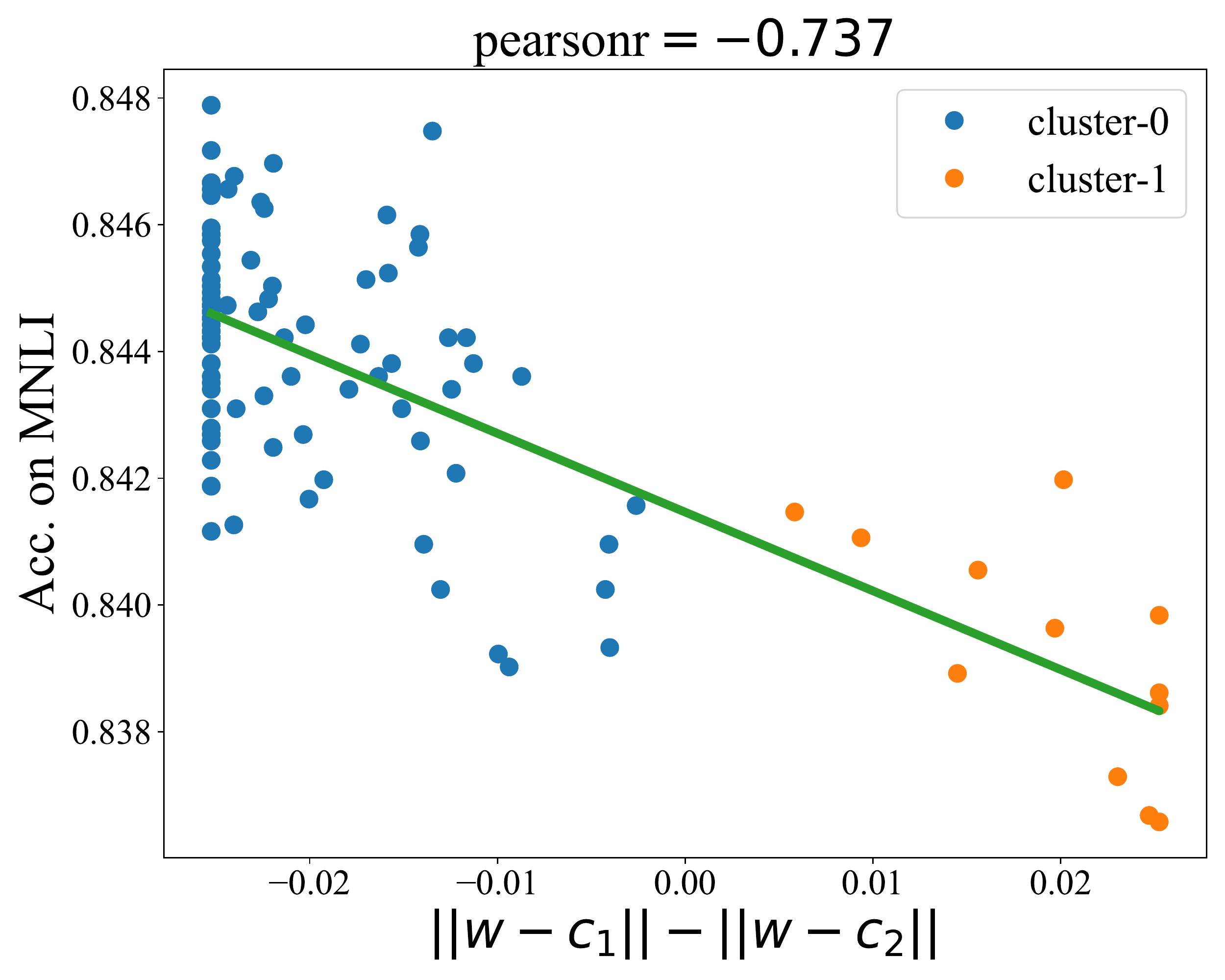}
    }
    \hfill
    \subfigure[Color indicates convexity gap. Models on axes are sorted according to ID performance.]{
        \includegraphics[width=0.45\textwidth]{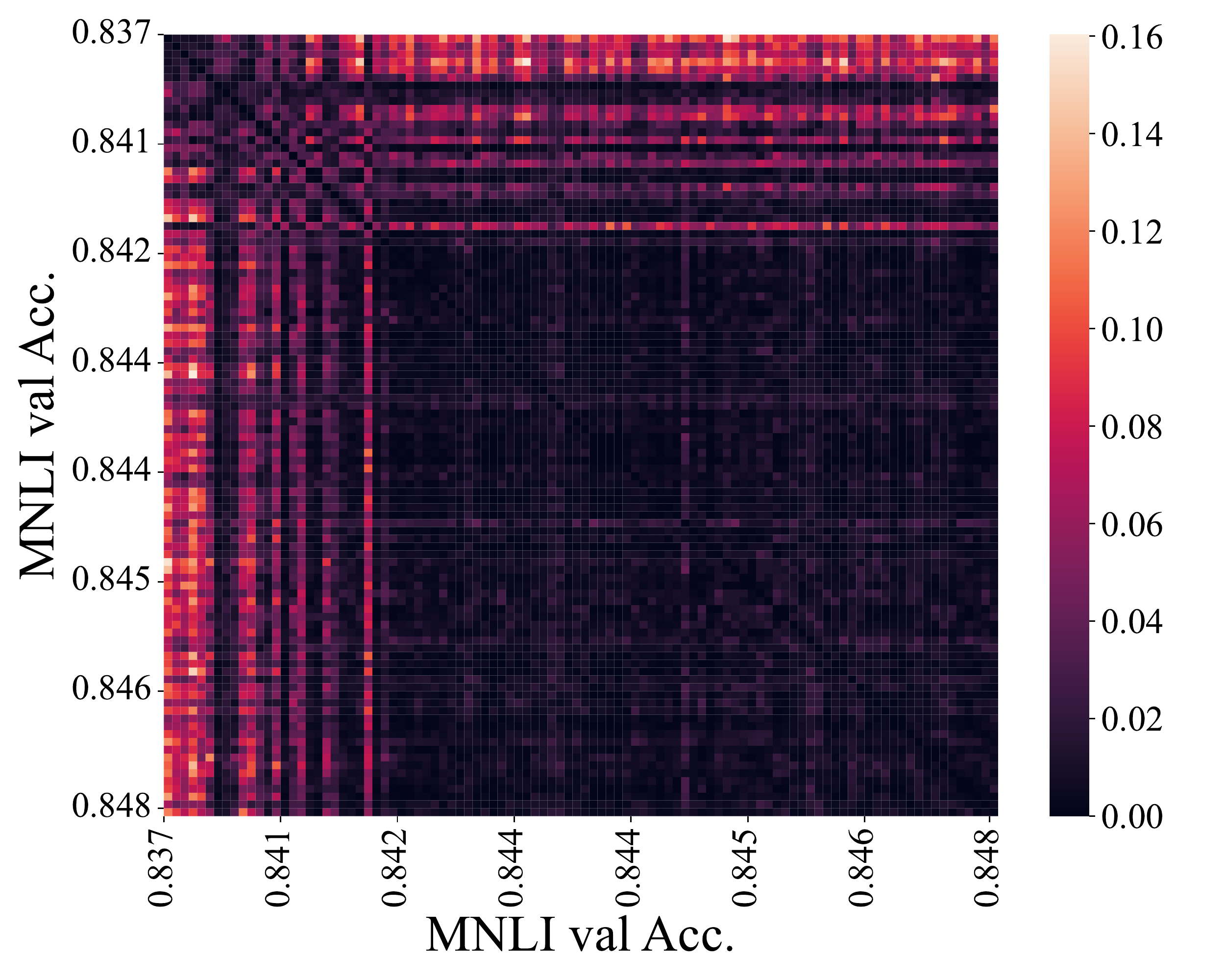}
    }
    \caption{Relationship between $\mathbf{CG}$ on the MNLI in-domain validation loss surface and MNLI in-domain validation accuracy.}
    \label{fig:indomain}
\end{figure}

\begin{figure}[ht]
    \centering
    \subfigure[Clusters are based on convexity gaps in the QQP validation loss surface. Best squares fit shown.]{
        \includegraphics[height=0.36\textwidth]{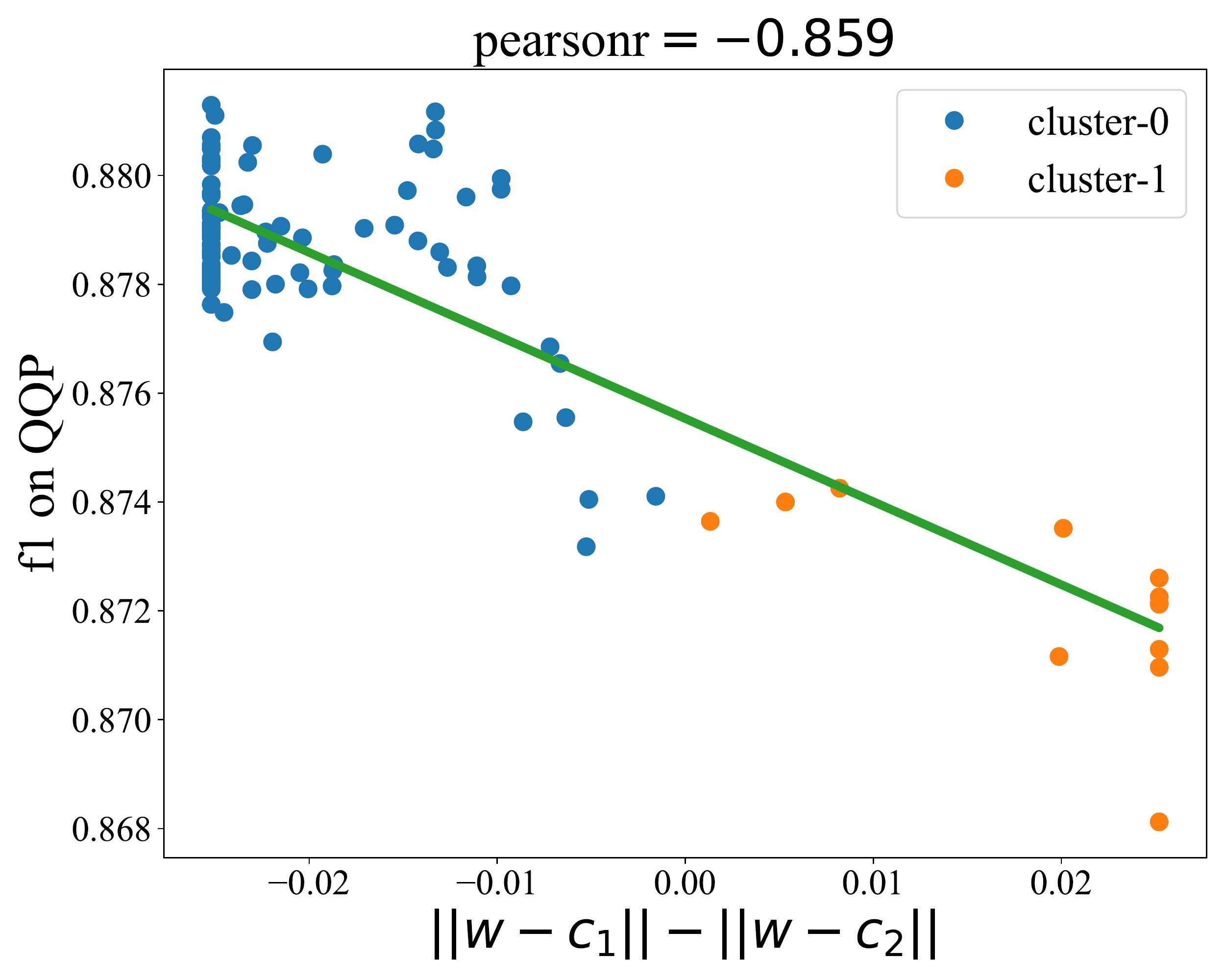}
    }
    \hfill
    \subfigure[Color indicates convexity gap. Models on axes are sorted according to ID performance.]{
        \includegraphics[height=0.36\textwidth]{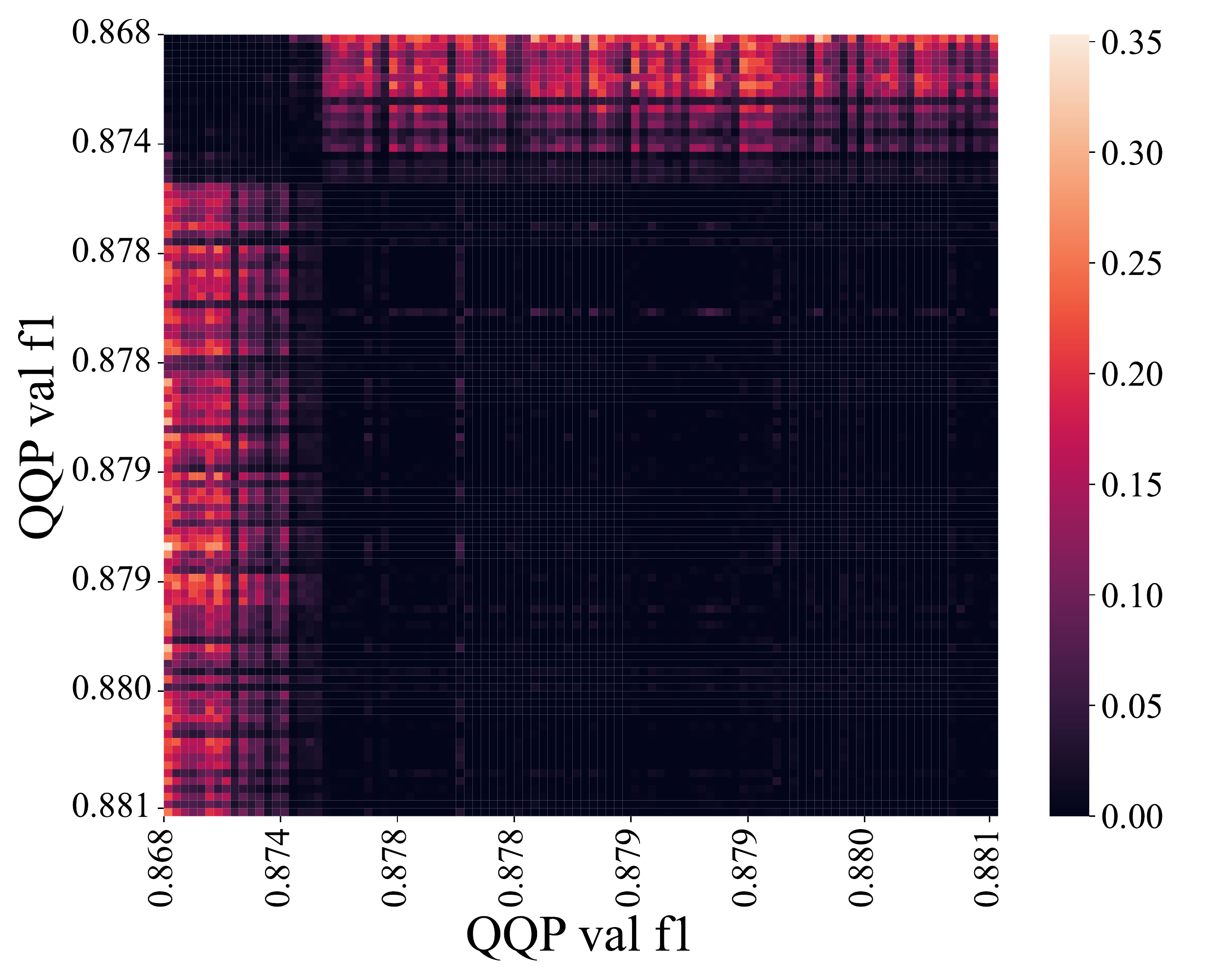}
    }
    \caption{Relationship between $\mathbf{CG}$ on the QQP in-domain validation loss surface and QQP in-domain validation accuracy.}
    \label{fig:qqp_indomain}
\end{figure}

It is well known that OOD generalization is strongly correlated with ID performance in general \citep{miller_accuracy_2021}. Let us therefore consider the skeptical position that basin membership does not directly relate to OOD generalization strategies, but instead OOD performance is mediating the effect of general model quality in-domain. While ID (on MNLI) and OOD (on HANS) performance are certainly positively correlated (Appendix~\ref{sec:similarity}), we find that all 12 of the models in the syntax-unaware cluster fall at least 1.3 standard deviations below the mean of the syntax-aware cluster's performance on HANS-LO (Table~\ref{tab:distributions}), and only 1 standard deviation below the mean on MNLI validation. The partitions are also less clear in the ID metrics (Fig.~\ref{fig:indomain}) compared to lexical overlap. Therefore, considered as a binary class, basin membership is more predictive of OOD behavior than of ID on NLI. Furthermore, the fact that performance on HANS-constituent is \textit{negatively} correlated with ID performance (Appendix~\ref{sec:similarity}) makes it clear that the underlying difference between these clusters is syntax-awareness, not overall model quality.

CoLA presents even stronger evidence for interpreting basin membership as related to OOD generalization over ID. The single outlier model falling outside of the main basin is a full 2.6 standard deviations outside the mean on OOD performance, but only 0.6 on ID performance.

However, we should not dismiss the relationship between ID accuracy and cluster membership. Compared to the diagnostic set PAWS, the ID validation set is more predictive of $\mathbf{CG}$ on QQP (Fig.~\ref{fig:qqp_indomain}). In both MNLI and QQP, the relationship between ID and centroid distance also seems more linear than OOD. Based on these findings, the human-defined heuristics of the PAWS dataset may not be the correct characterization of the difference in generalization behaviors between different basins. We leave the discovery of better characterizations of difference in strategy between basins to future work.

One caveat is the fact that ID evaluation and ID convexity gap measurements are performed on the same validation set, because the test sets are not public for these datasets. Therefore, ID performance is exposed during the clustering process, casting some doubt on alignment between the clusters and ID performance.

\section{Alternative barrier measurements}
\label{sec:alternative_barrier}

In Fig.~\ref{fig:barrier_height_metric} we show  NLI model clustering using the original barrier height metric (Equation~\ref{eqn:bh_def}) from \citet{entezari_role_2021}. We can see that the Pearson's correlation coefficient is $-0.49$, which is far lower in magnitude than correlation in the $\mathbf{CG}$-metric space.

We also show the results for another metric, viz. area under the interpolation curve (AUC), in Fig.~\ref{fig:auc_metric}. Although this shows the same Pearson's correlation coefficient, the clusters are much less crisp in the heatmap. In order to compute AUC, we first subtract the lowest point on the curve from all points, and then compute the area under the shifted curve.

Finally, we consider the effect of Euclidean distance (Fig.~\ref{fig:euc_dist_metric_qqp} and~\ref{fig:euc_dist_metric_mnli}). The clustering effect is extremely strong and predictive of generalization behavior. It is worth considering the possibility that basins trap the models in a way that forces them to have larger Euclidean distances, but those Euclidean distances are the property that most determines the generalization strategy of the model.

\begin{figure}
    \centering
    \subfigure{
        \includegraphics[height=0.36\textwidth]{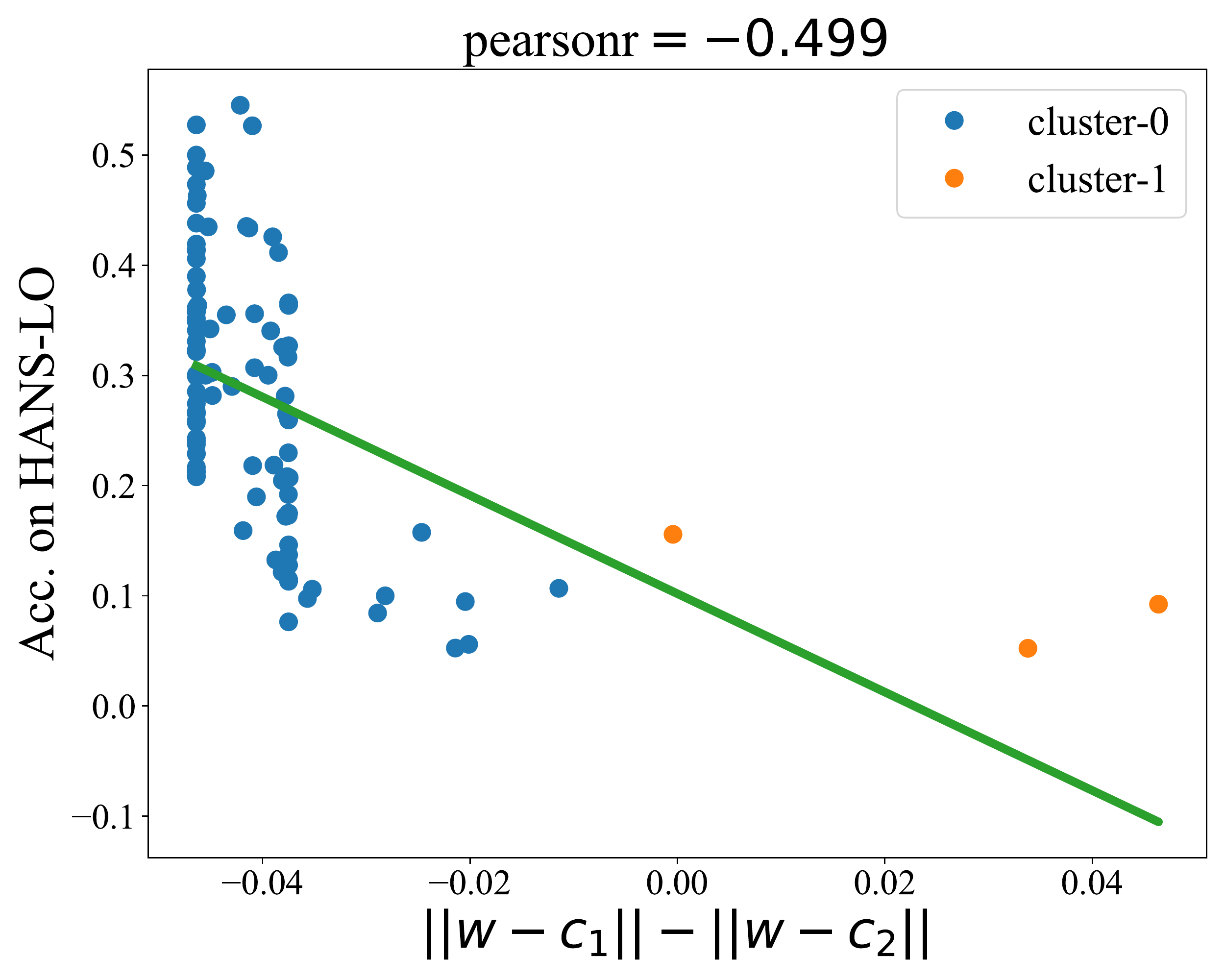}
    }
    \hfill
    \subfigure{
        \includegraphics[height=0.36\textwidth]{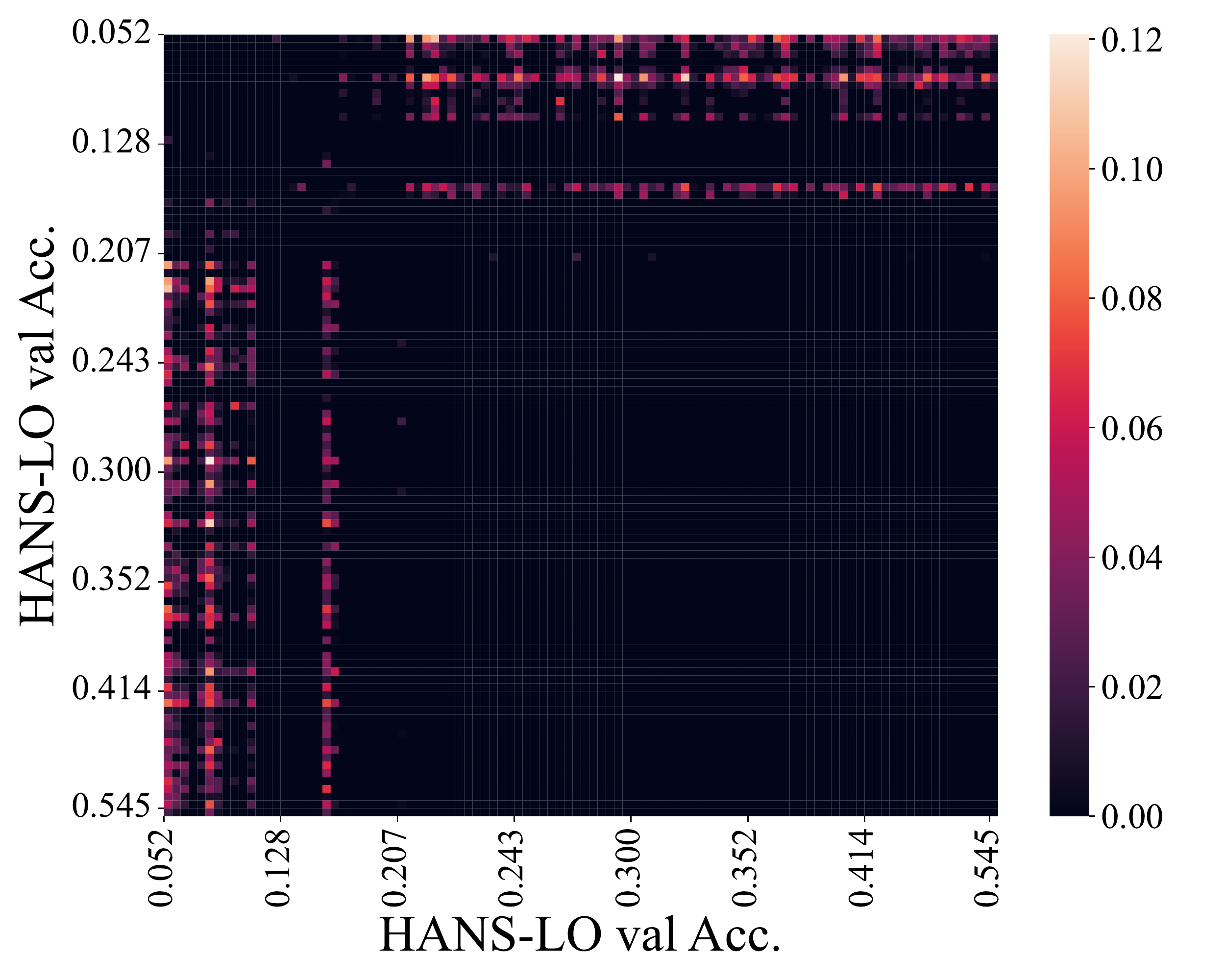}
    }
    \caption{Relationship between $\mathbf{BH}$ on the MNLI validation loss surface and HANS-LO accuracy.}
    \label{fig:barrier_height_metric}
\end{figure}

\begin{figure}
    \centering
    \subfigure{
        \includegraphics[height=0.36\textwidth]{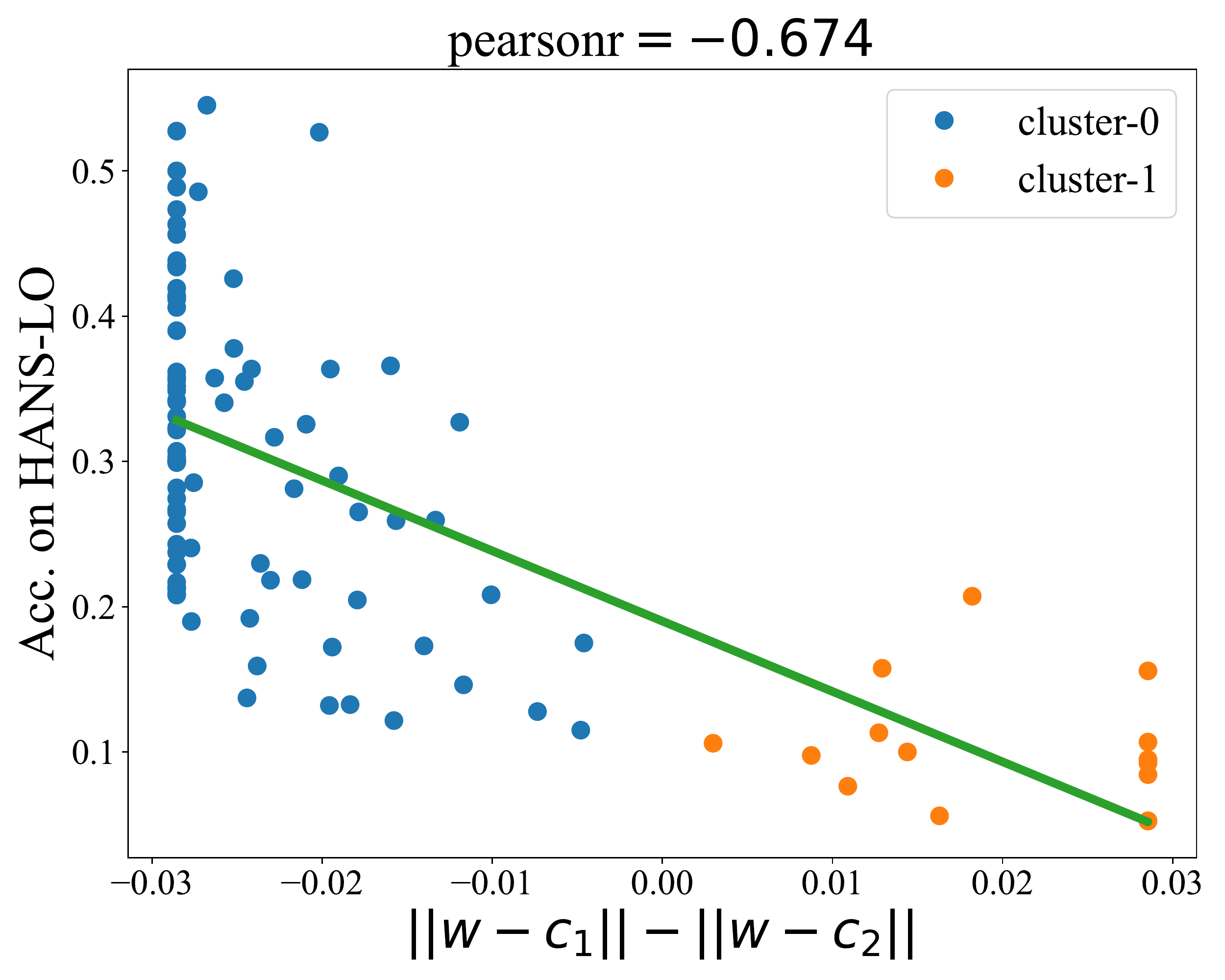}
    }
    \hfill
    \subfigure{
        \includegraphics[height=0.36\textwidth]{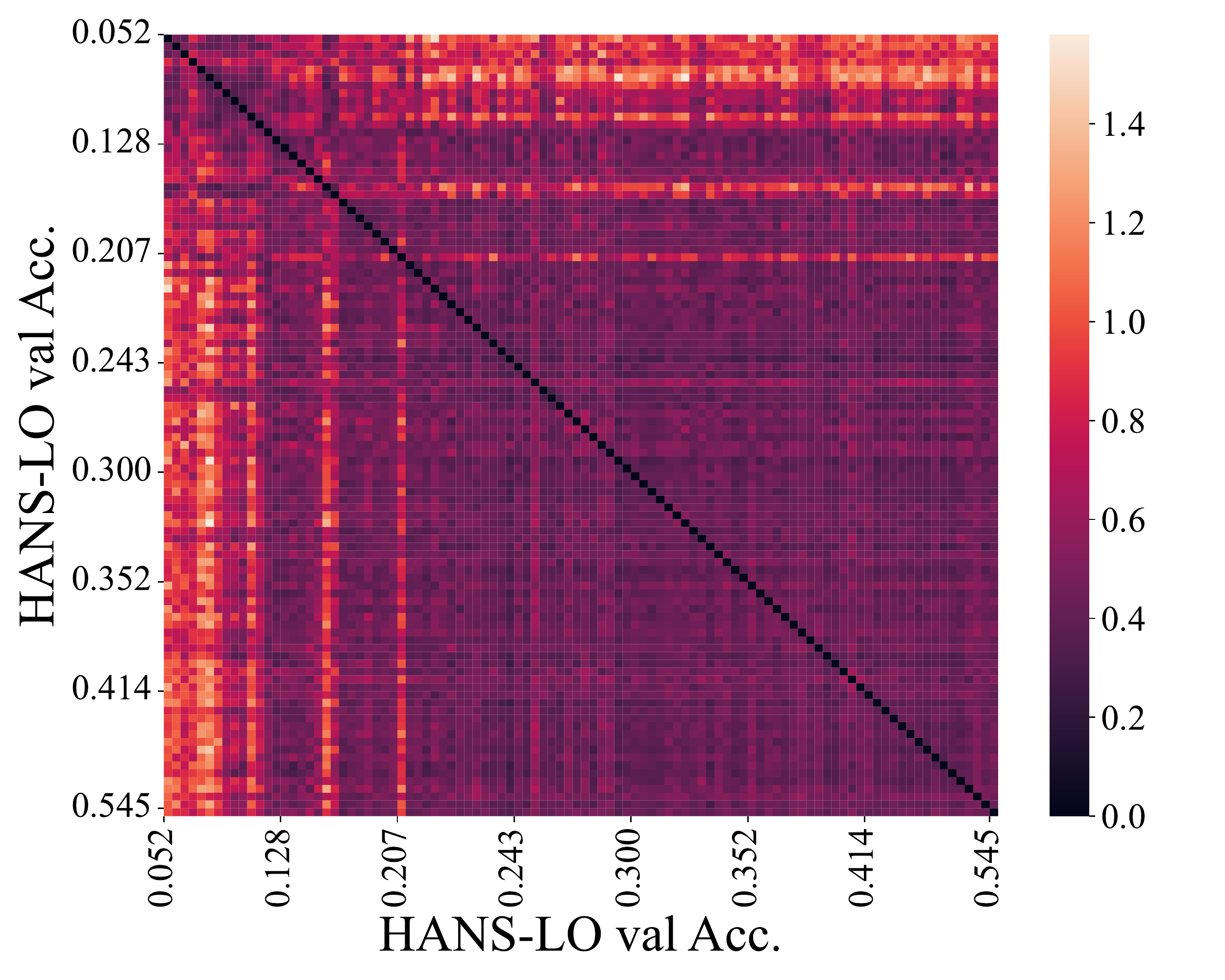}
    }
    \caption{Relationship between AUC on the MNLI validation loss surface and HANS-LO accuracy.}
    \label{fig:auc_metric}
\end{figure}

\begin{figure}
    \centering
    \subfigure{
        \includegraphics[height=0.36\textwidth]{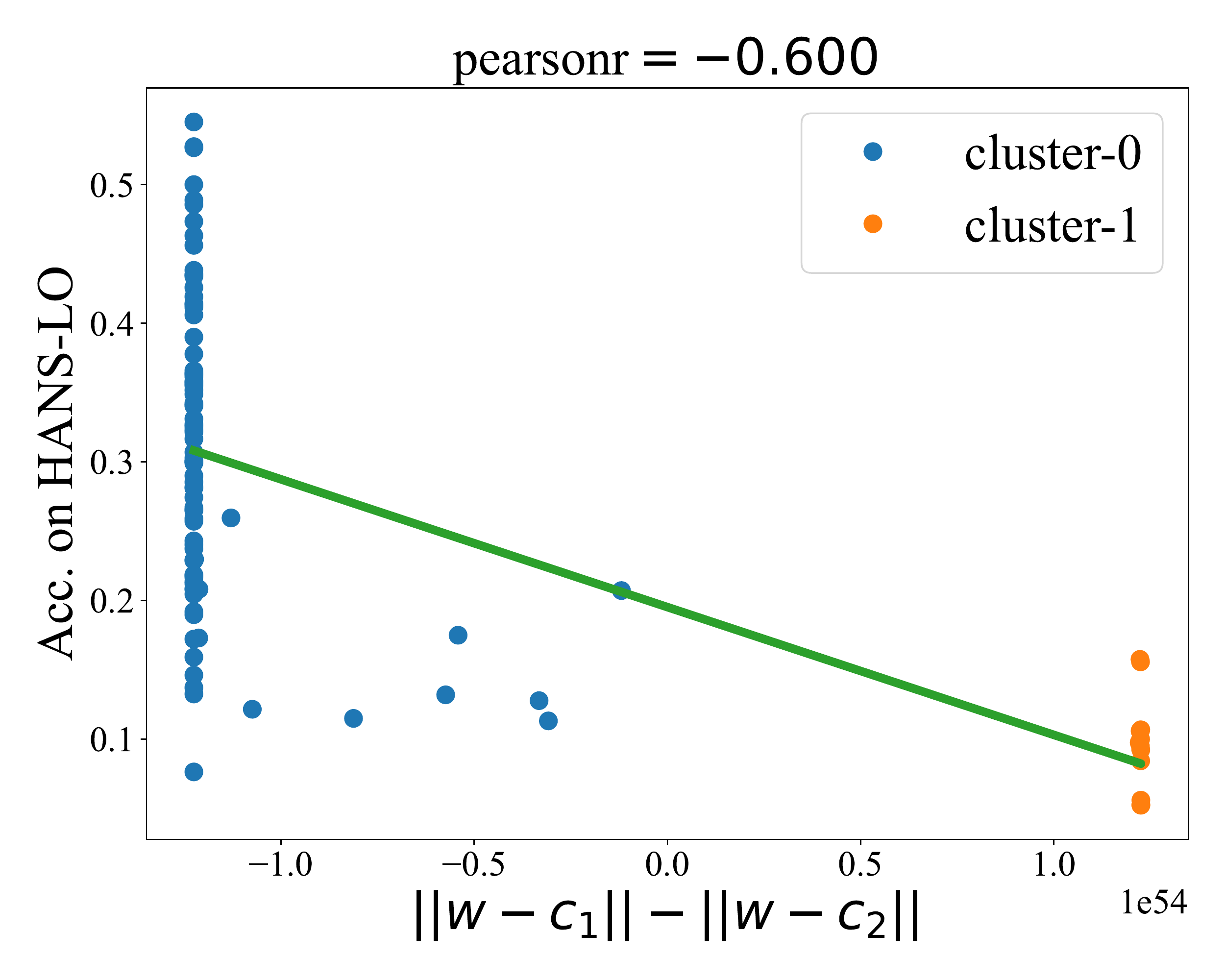}
    }
    \hfill
    \subfigure{
        \includegraphics[height=0.36\textwidth]{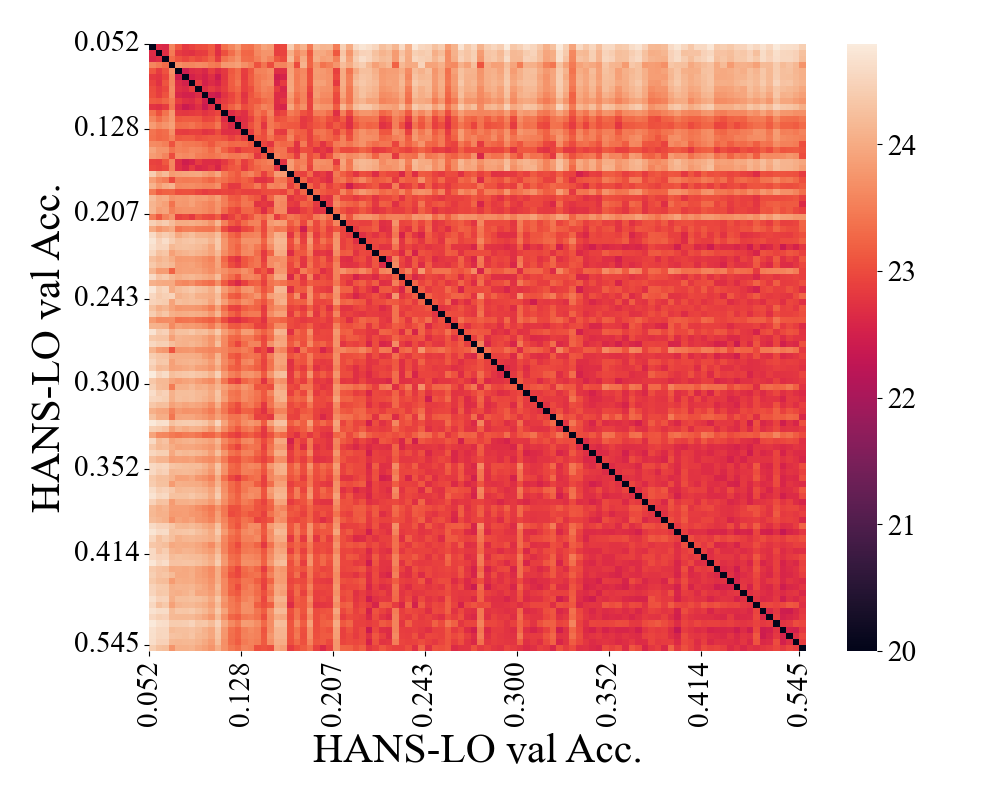}
    }
    \caption{Relationship between Euclidean distance on the MNLI validation loss surface and HANS-LO accuracy.}
    \label{fig:euc_dist_metric_mnli}
\end{figure}

\begin{figure}
    \centering
    \subfigure{
        \includegraphics[height=0.36\textwidth]{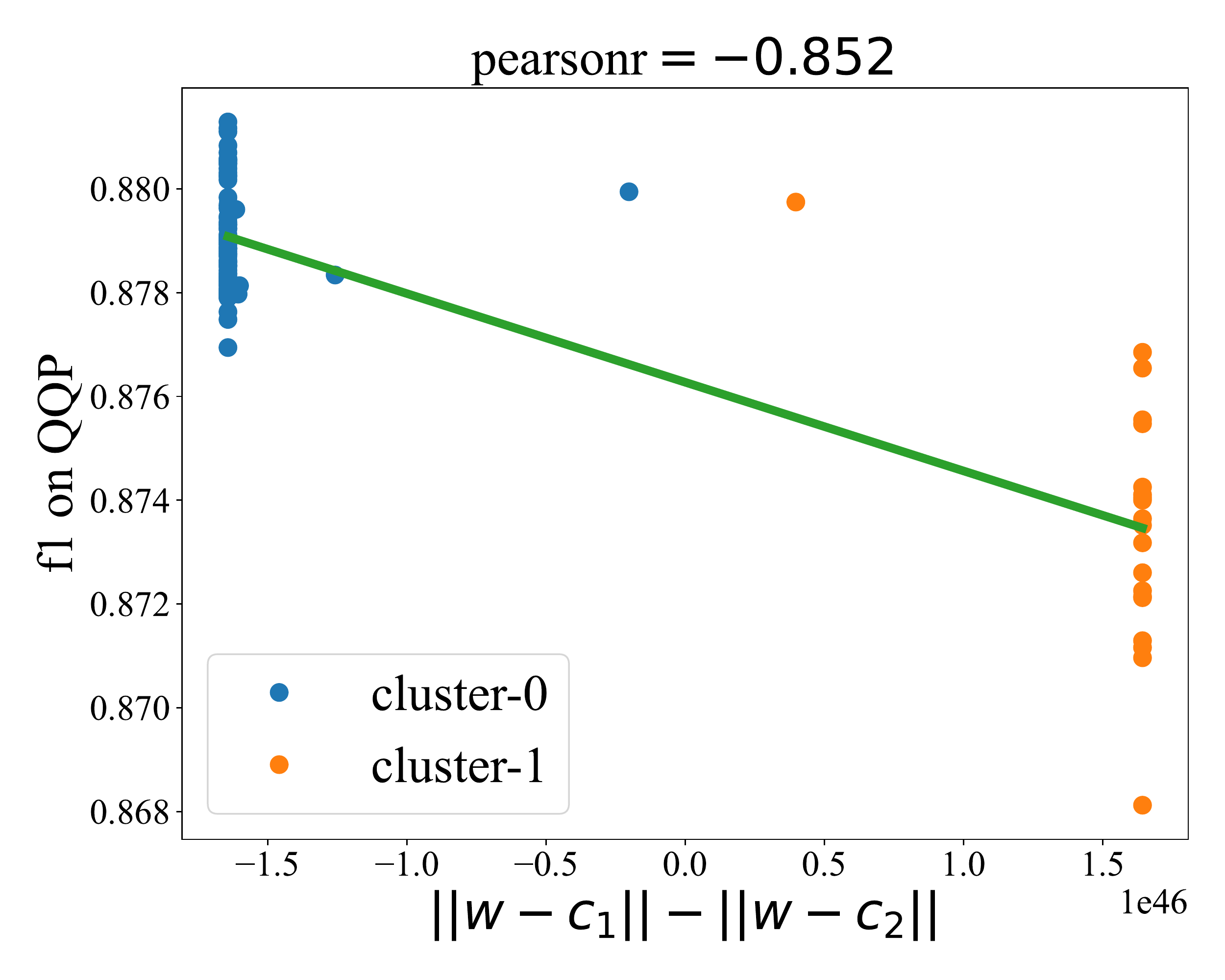}
    }
    \hfill
    \subfigure{
        \includegraphics[height=0.36\textwidth]{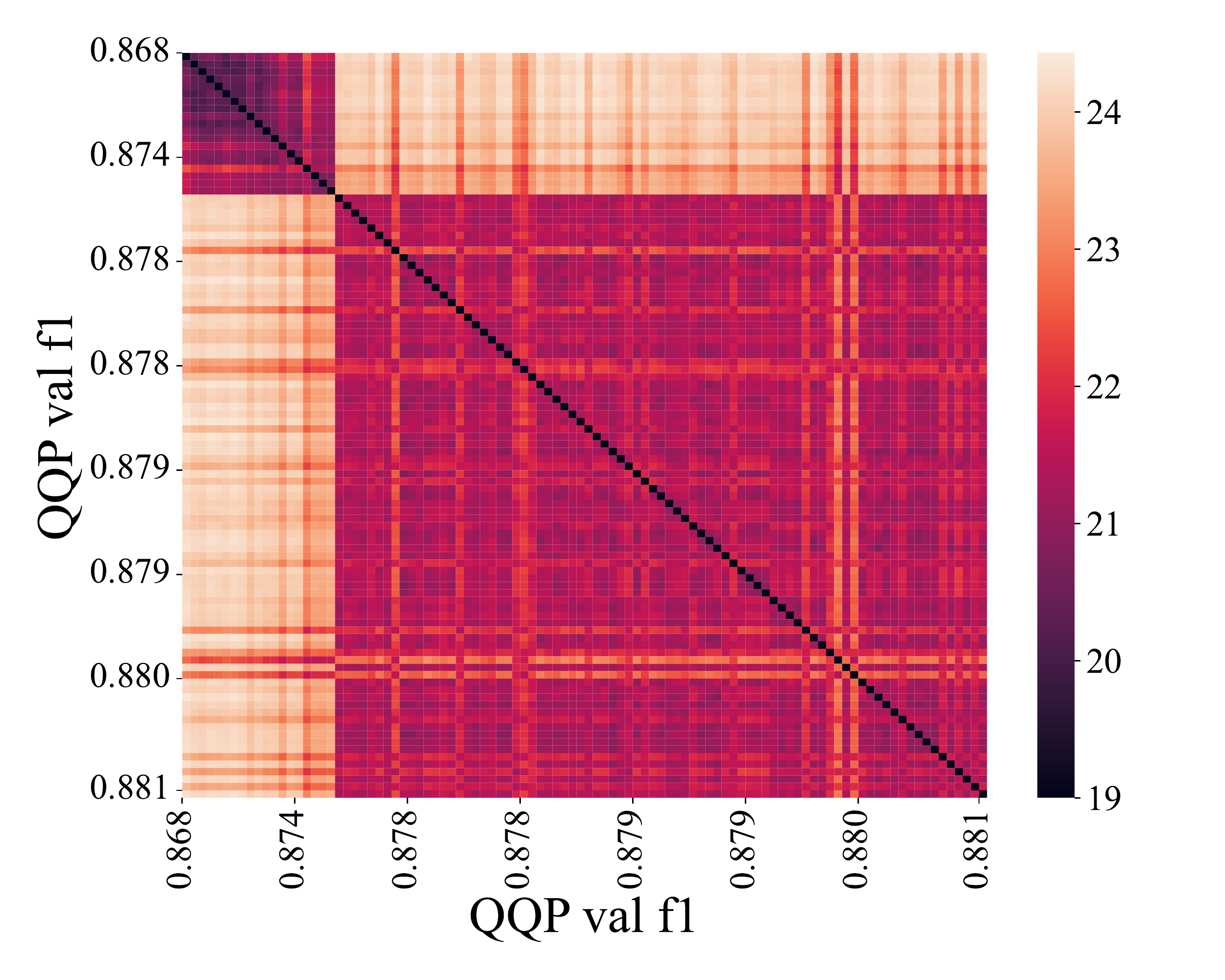}
    }
    \caption{Relationship between Euclidean distance on the QQP validation loss surface and PAWS-QQP accuracy.}
    \label{fig:euc_dist_metric_qqp}
\end{figure}

\section{RoBERTa Results}
\label{sec:roberta}

BERT shows consistent clustering behavior, consistently aligned with generalization performance, across each text classification task measured. RoBERTa \citep{https://doi.org/10.48550/arxiv.1907.11692} uses the same architecture but trained longer and with slightly different objective and hyperparameters. Despite this similarity, RoBERTa shows significantly better generalization on HANS-LO (Fig.~\ref{fig:roberta_histogram}) and does \textit{not} exhibit clear clustering behavior (Fig.~\ref{fig:roberta_heatmap}). These results suggest that the simple basin selection behavior we observe in BERT may indicate that a model can still benefit from longer periods of pretraining.

\begin{figure}[h]
    \centering
    \subfigure[Heatmap of \textbf{CG} for finetuned RoBERTa models, sorted by HANS-LO performance. No significant clustering effects are visible, but the scale of barriers is similar to that among the heavily clustered BERT models.]{
        \includegraphics[width=0.45\textwidth]{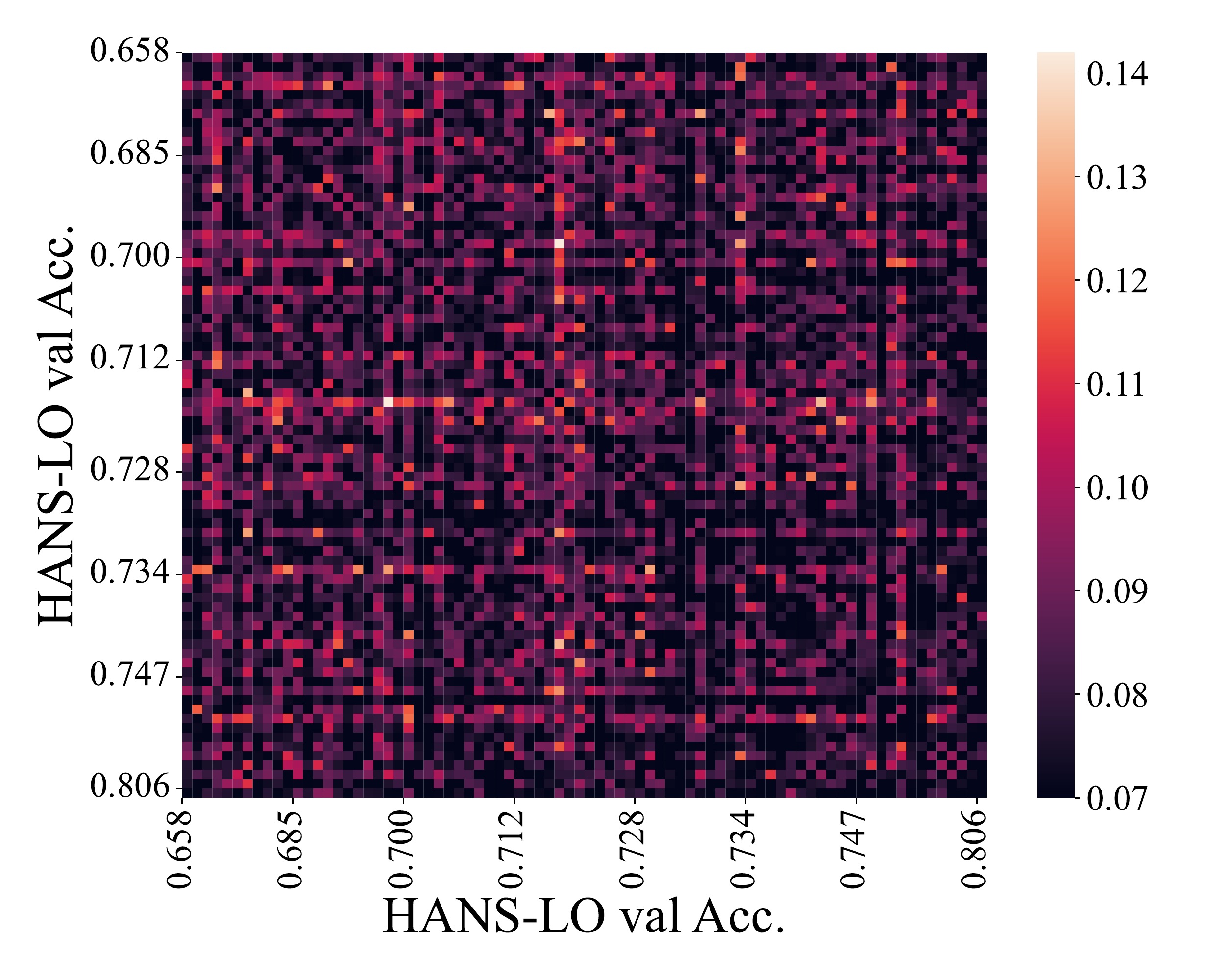}
    \label{fig:roberta_heatmap}
    }
    \hfill
    \subfigure[Histogram on HANS-LO performance for finetuned RoBERTa models, with blue and orange indicating the respective models found in each of the two clusters formed by k-means clustering on the \textbf{CG} spectral space. ]{
        \includegraphics[width=0.45\textwidth]{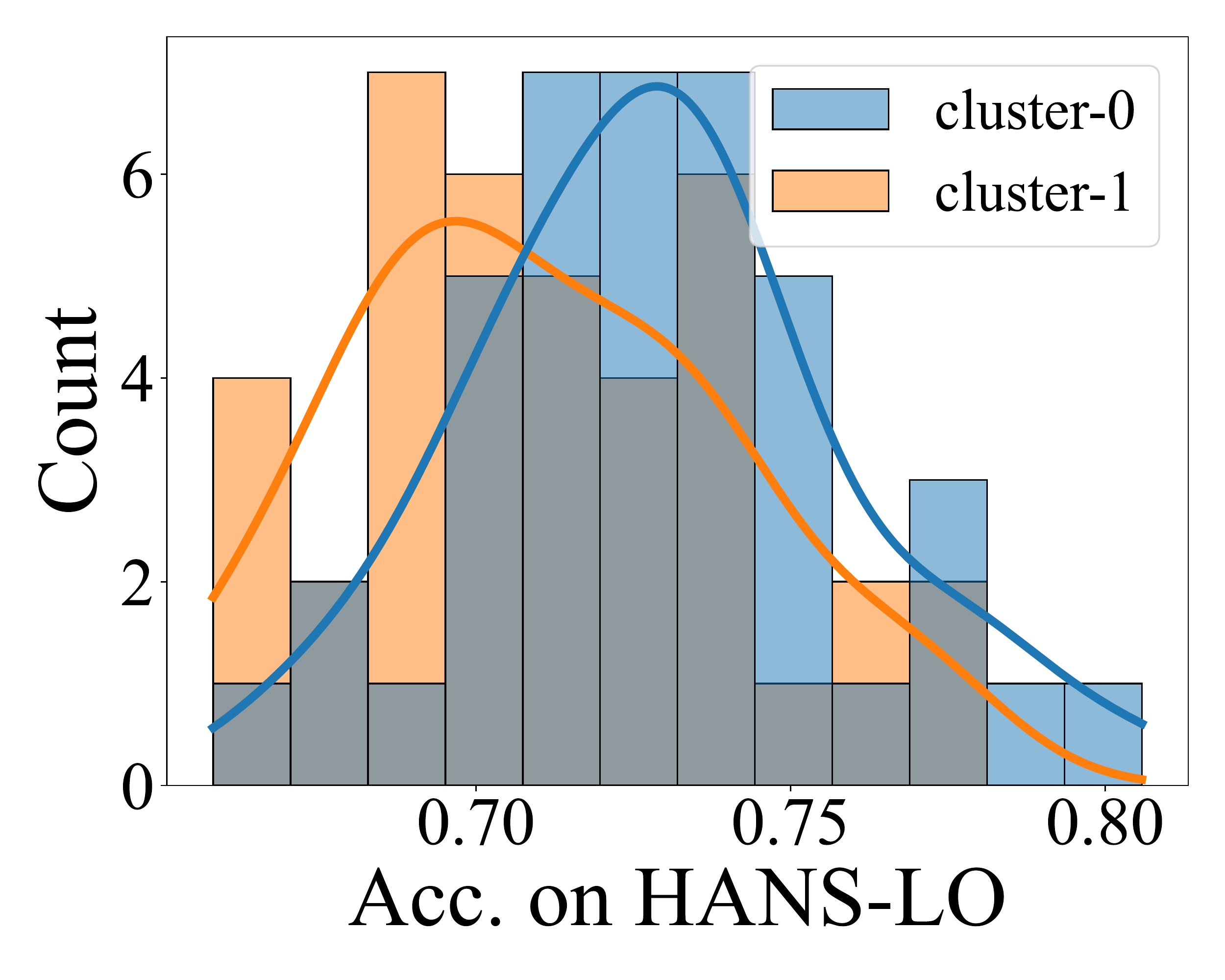}
    \label{fig:roberta_histogram}
    }

    \caption{\textbf{CG} results on a set of MNLI models finetuned from RoBERTa.}
    \label{fig:roberta}
\end{figure}

\section{Relation to Finetuning Dynamics}
\label{sec:distortion}

\citet{kumar2022finetuning} analytically proved that finetuning creates a distorting effect by rescaling existing features rather than learning new features. To remedy this distortion, they proposed that rather than finetuning the entire network at once, we first train the linear classifier head in isolation and then finetune the rest of the network. This procedure is called linear probing then full fine-tuning (LP-FT). We find that LP-FT reduces the range of convexity gaps, and accompanying clustering effects, between models (Fig.~\ref{fig:probing_heatmap}). This change is accompanied by a slight shift in the distribution of behavior on HANS-LO (Fig.~\ref{fig:probing_histogram}).

These findings are to be expected, as training the linear layer with the rest of the model frozen is a convex optimization problem, and therefore each training run should be settling into the same basin to start. These results imply that avoiding certain basins may be part of the advantage of LP-FT over traditional finetuning. Although the resulting models still contain a slight outlier, the strong multi-cluster behavior is no longer evident. It also presents a potential disadvantage of LP-FT for ensembling, however, as the resulting models are more homogenous and therefore ensembling can provide limited benefits. 

\begin{figure}
    \centering
    \subfigure[Heatmap of \textbf{CG} for BERT MNLI models trained by LP-FT, sorted by HANS-constituent performance. This approach dramatically reduces the non convexity of interpolation, compared to full finetuning.]{
        \includegraphics[width=0.45\textwidth]{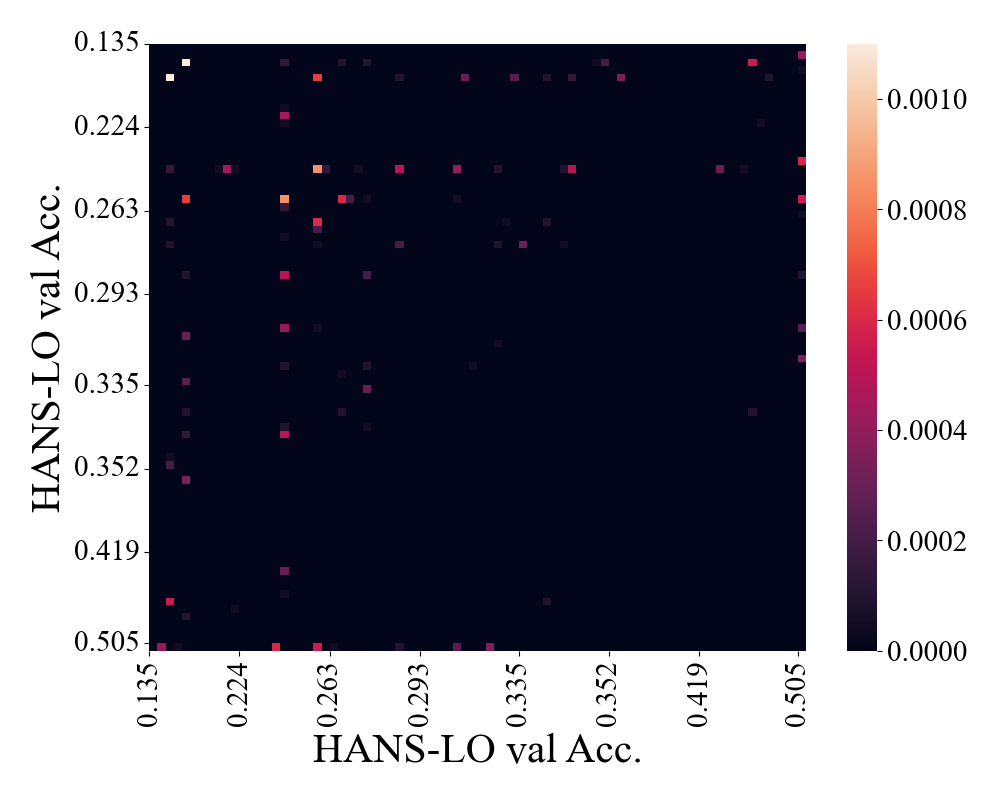}
    \label{fig:probing_heatmap}
    }
    \hfill
    \subfigure[Histogram on HANS-LO performance for LP-FT BERT models, with blue and orange indicating the respective models found in each of the two clusters formed by k-means clustering on the \textbf{CG} spectral space. ]{
        \includegraphics[width=0.45\textwidth]{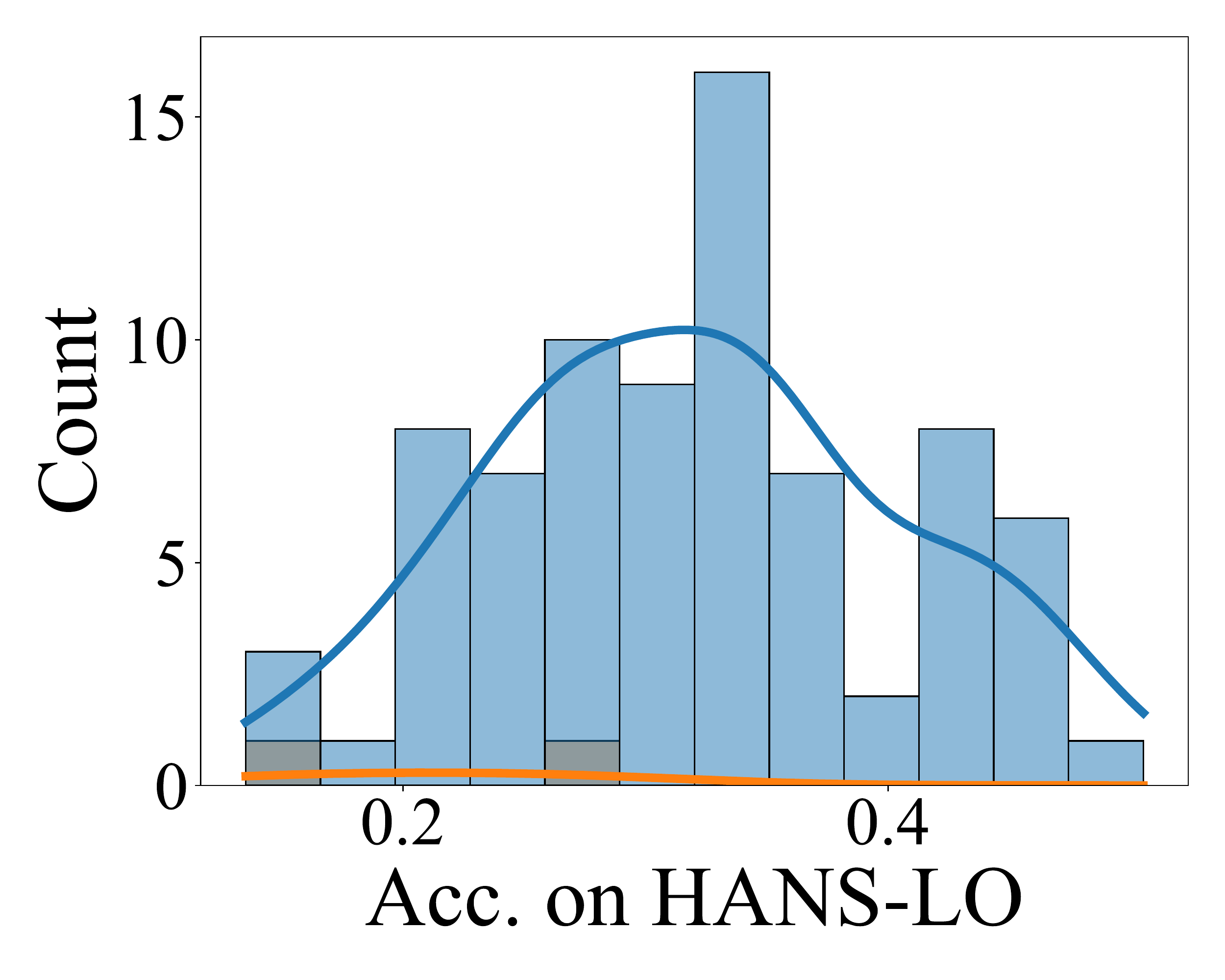}
    \label{fig:probing_histogram}
    }

    \caption{\textbf{CG} results on a set of MNLI models, initialized from BERT, that have been trained by LP-FT \citep{kumar2022finetuning}.}
    \label{fig:probing}
\end{figure}

\section{The relationship between Euclidean distance and $\mathbf{CG}$}

\begin{figure}[h]
    \centering
    \subfigure{
        \includegraphics[width=0.47\textwidth]{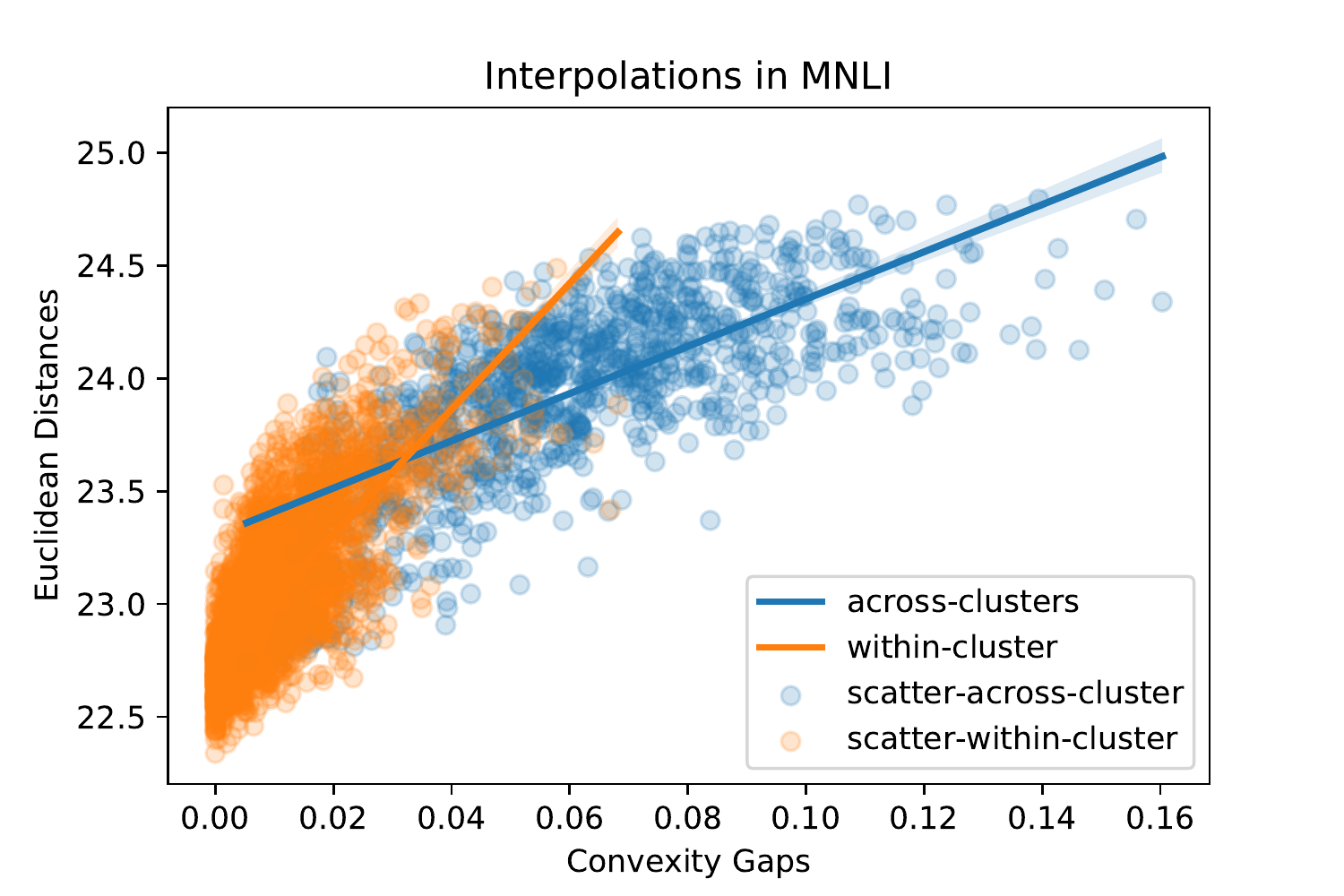}
    }
    \hfill
    \subfigure{
        \includegraphics[width=0.47\textwidth]{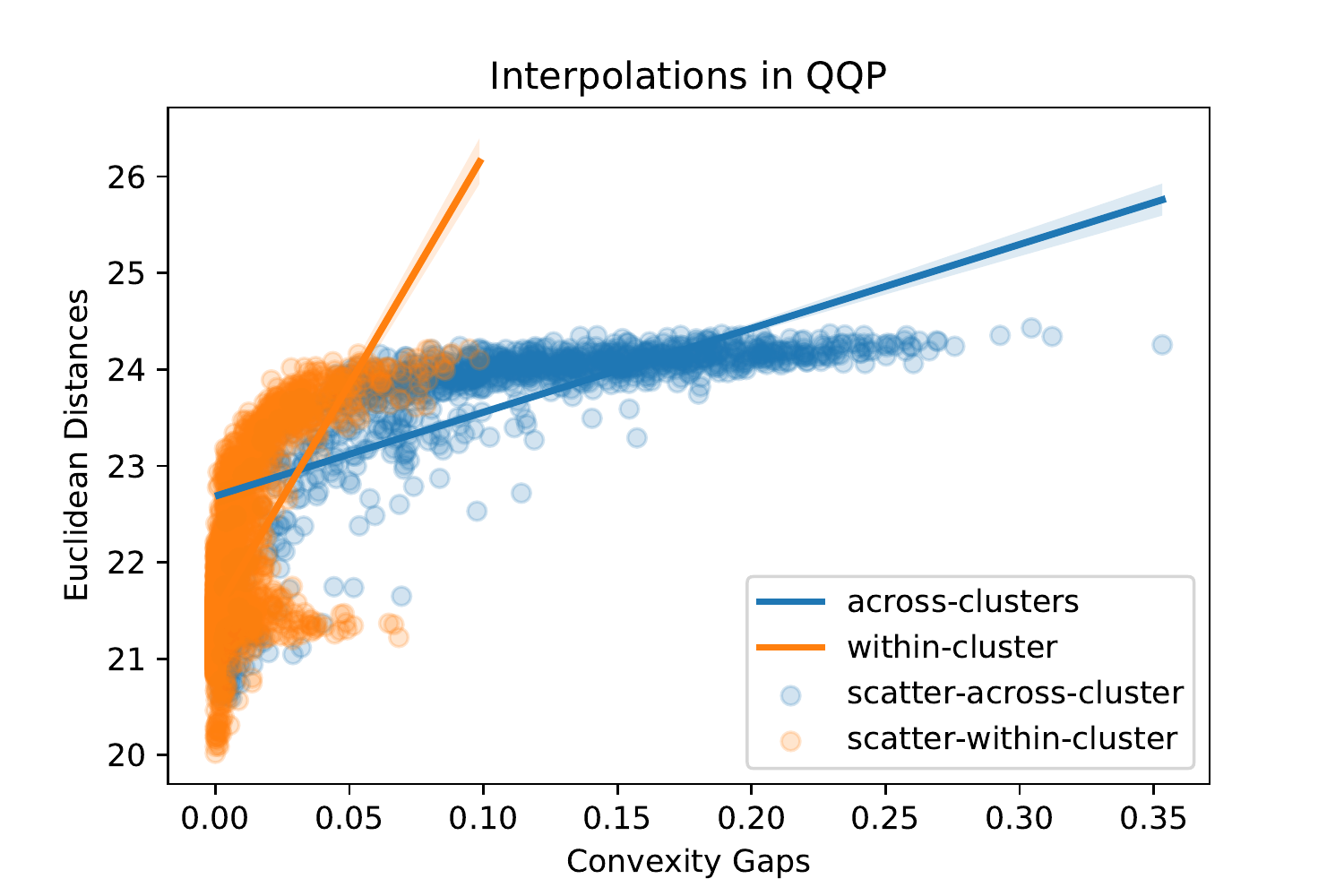}}

    \caption{Relationship between Euclidean distance and  $\mathbf{CG}$. Line of best fit shown separately for connections between and within basin clusters found in the validation loss  $\mathbf{CG}$ distance space.  (a) MNLI models (b) QQP models}

    \label{fig:dist_vs}
\end{figure}

One extreme possibility is that these are not distinct basins, but instead that every model falls on the perimeter of a single radially symmetric barrier where every cross section is a nearly identical shape. The  $\mathbf{CG}$ would then be determined entirely by Euclidean distance. Although clearly loss surface geometry causes Euclidean distances to emerge rather than vice versa, we also see from Fig.~\ref{fig:dist_vs} that the relationship between these two metrics is different within vs between basin clusters. Although there is no stark division between connections within and between basins, in general Euclidean distance is more responsive to (i.e., has a greater slope with respect to) the size of the barrier within clusters. Therefore placement of a model within a basin might determine Euclidean distance in a different manner than placement with respect to other basins.




\end{document}